\DeclareMathOperator*{\argmin}{arg\,min}
\DeclareMathOperator*{\supp}{supp}
\DeclareMathOperator*{\Th}{th}
\theoremstyle{plain}
\newtheorem{theorem}{Theorem}[section]
\newtheorem{lemma}[theorem]{Lemma}
\newtheorem{corollary}[theorem]{Corollary}
\theoremstyle{definition}
\newtheorem{definition}[theorem]{Definition}
\theoremstyle{remark}
\newtheorem{remark}[theorem]{Remark}
\icmltitlerunning{Feature Selection using e-values}
\begin{document}

\twocolumn[
\icmltitle{Feature Selection using e-values}




\begin{icmlauthorlist}
\icmlauthor{Subhabrata Majumdar}{umn,spl}
\icmlauthor{Snigdhansu Chatterjee}{umn}
\end{icmlauthorlist}

\icmlaffiliation{umn}{School of Statistics, University of Minnesota Twin Cities, Minneapolis, MN, USA}
\icmlaffiliation{spl}{Currently at Splunk}

\icmlcorrespondingauthor{Subhabrata Majumdar}{smajumdar@splunk.com}

\icmlkeywords{Feture selection, model selection, data depth, bootstrap}

\vskip 0.3in
]



\printAffiliationsAndNotice{} 

\begin{abstract}
In the context of supervised parametric models, we introduce the concept of {\it e-values}. An e-value is a scalar quantity that represents the proximity of the sampling distribution of parameter estimates in a model trained on a subset of features to that of the model trained on all features (i.e. the full model). Under general conditions, a rank ordering of e-values separates models that contain all essential features from those that do not.

The e-values are applicable to a wide range of parametric models. We use data depths and a fast resampling-based algorithm to implement a feature selection procedure using e-values, providing consistency results. For a $p$-dimensional feature space, this procedure requires fitting only the full model and evaluating $p+1$ models, as opposed to the traditional requirement of fitting and evaluating $2^p$ models. Through experiments across several model settings and synthetic and real datasets, we establish that the e-values method as a promising general alternative to existing model-specific methods of feature selection.
\end{abstract}

\section{Introduction}\label{sec:IntroSection}
\label{Section:Introduction}


In the era of big data, feature selection in supervised statistical and machine learning (ML) models  helps cut through the noise of superfluous features, provides storage and computational benefits, and gives model intepretability. Model-based feature selection can be divided into two categories \cite{Guyon03}: wrapper methods that evaluate models trained on multiple feature sets \cite{ref:Schwarz_AoS78461,ref:Shao_JASA96655}, and embedded methods that combine feature selection and training, often through sparse regularization \cite{Tibshirani96,Zou06,elasticnet}. Both categories are extremely well-studied for independent data models, but have their own challenges. Navigating an exponentially growing feature space using wrapper methods is NP-hard \citep{Natarajan95}, and case-specific search strategies like $k$-greedy, branch-and-bound, simulated annealing are needed. Sparse penalized embedded methods can tackle high-dimensional data, but have inferential and algorithmic issues, such as biased Lasso estimates \citep{ZhangZhang14} and the use of convex relaxations to compute approximate local solutions \citep{WangKimLi13,ZouLi08}.

Feature selection in dependent data models has received comparatively lesser attention. Existing implementations of wrapper and embedded methods have been adapted for dependent data scenarios, such as mixed effect models \cite{MezaLahiri05,NguyenJiang14,ref:PengLu_JMVA12109} and spatial models \cite{HuangEtal10,LeeGhosh09}. However these suffer from the same issues as their independent data counterparts. If anything, the higher computational overhead for model training makes implementation of wrapper methods even harder in this situation!


In this paper, we propose the framework of {\it e-values} as a common principle to perform best subset feature selection in a range of parametric models covering independent and dependent data settings. In essence ours is a wrapper method, although it is able to determine important features affecting the response {\it by training a single model}: the one with all features, or the full model. We achieve this by utilizing the information encoded in the {\it distribution} of model parameter estimates. Notwithstanding recent efforts \cite{LaiHannigLee15,SinghEtal07}, parameter distributions that are fully data-driven have generally been underutilized in data science. In our work, e-values score a candidate model to quantify the similarity between sampling distributions of parameters in that model and the full model. Sampling distribution is the distribution of a parameter estimate, based on the random data samples the estimate is calculated from. We access sampling distributions using the efficient Generalized Bootstrap technique \cite{ref:CBose_AoS05414}, and utilize them using data depths, which are essentially point-to-distribution inverse distance functions \citep{tukey75,zuo03, ref:ZuoSerfling_AoS00461}, to compute e-values.

\paragraph{How e-values perform feature selection}
Data depth functions quantify the inlyingness of a point in multivariate space with respect to a probability distribution or a multivariate point cloud, and have seen diverse applications in the literature \citep{Jornsten04,NarisettyNair16,regdepth}. As an example, consider the {\it Mahalanobis depth}, defined as
$$ D(x, F) = [1 + (x-\mu_F)^\top \Sigma_F^{-1} (x-\mu_F)]^{-1}, $$
for $x \in \mathbb R^p$ and $F$ a $p$-dimensional probability distribution withg mean $\mu_F$ and covariance matrix $\Sigma_F$. When $x$ is close to $\mu_F$, $D(x, F)$ takes a high value close to 1. On the other hand, as $\| x\| \rightarrow \infty$, the depth at $x$ approaches 0. Thus, $D(x, F)$ quantifies the proximity of $x$ to $F$. Points with high depth are situated 'deep inside' the probability distribution $F$, while low depth points sit at the periphery of $F$.

Given a depth function $D$ we define e-value as the mean depth of the finite-sample estimate of a parameter $\beta$ for a candidate model $\cM$, with respect to the sampling distribution of the full model estimate $\hat \beta$:
$$ e (\cM) = \BE D( \hat \beta_\cM, [ \hat \beta]),$$
where $\hat \beta_\cM$ is the estimate of $\beta$ assuming model $\cM$, $[\hat \beta]$ is the sampling distribution of $\hat \beta$, and $\BE(\cdot)$ denotes expectation. For large sample size $n$, the index set $\cS_{select}$ obtained by Algorithm~\ref{alg:alg1} below elicits all non-zero features in the true parameter. We use bootstrap to obtain multiple copies of $\hat \beta$ and $\hat\bfbeta_{-j}$ for calculating the e-values in steps 1 and 5.
\begin{algorithm}
\label{alg:alg1}
\caption{Best subset selection using e-values}
\begin{enumerate}[nolistsep,leftmargin=*]
    \item[1:] Obtain full model e-value: $e(\cM_{full}) = \BE D ( \hat\bfbeta, [ \hat \bfbeta] )$.
    \item[2:] Set $\cS_{select} = \phi$.
    \item[3:] For $j$ in $1:p$
    \item[4:] \hspace{1em} Replace $j^\text{th}$ index of $\hat\bfbeta$ by 0, name it $\hat \beta_{-j}$.
    \item[5:] \hspace{1em} Obtain $e(\cM_{-j}) = \BE D ( \hat\bfbeta_{-j}, [ \hat \bfbeta] )$.
    \item[6:] \hspace{1em} If $e(\cM_{-j}) < e(\cM_{full}))$
    \item[7:] \hspace{2em} Set $ \cS_{select} \leftarrow \{ \cS_{select}, j \} $.
\end{enumerate}
\end{algorithm}
\begin{figure}[t!]
\centering
\includegraphics[width=.5\columnwidth]{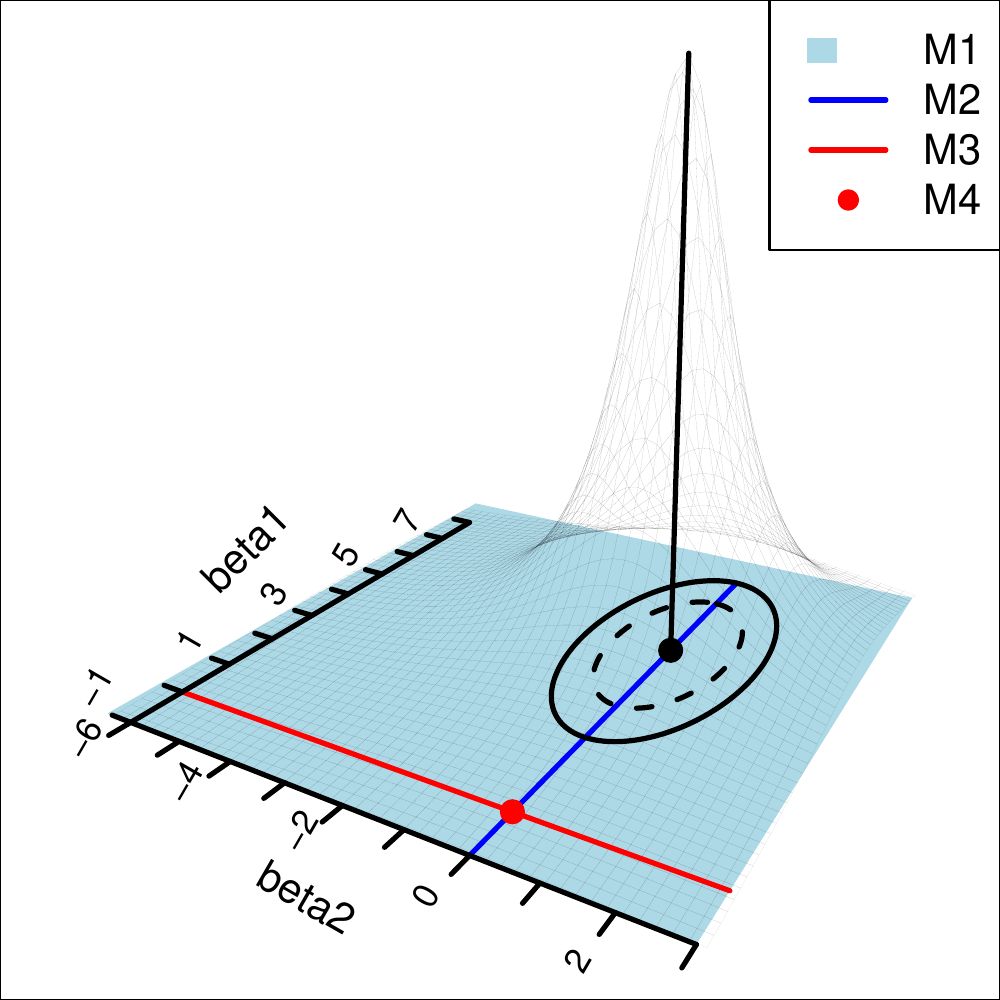}
\caption{The e-value method. Solid and dotted ellipses represent Mahalanobis depth contours at some $\alpha >0$ for sample sizes $n_1, n_2; n_2 > n_1$.}
\label{fig:exampleFig}
\end{figure}
%

As an example, consider a linear regression with two features, Gaussian errors $\epsilon \sim N(0,1)$, and the following candidate models (Figure~\ref{fig:exampleFig}). We denote by $\Theta_i$ the domain of parameters considered in model $\cM_i; i = 1,\ldots,4$.

\begin{center}
\begin{tabular}{ll}
$\cM_1: \ \ Y = X_1 \beta_1 + X_2 \beta_2 + \epsilon$; &$\ \ \Theta_1 = \BR^2,$\\
$\cM_{2}: \ \ Y  =  X_1 \beta_1 + \epsilon$; &$\ \ \Theta_2 = \BR \times \{ 0 \},$ \\
$\cM_{3}: \ \ Y  =  X_2 \beta_2 + \epsilon$; &$\ \ \Theta_3 = \{ 0 \} \times \BR,$\\
$\cM_{4}: \ \ Y  =  \epsilon $; &$\ \ \Theta_4 = (0,0)^T.$
\end{tabular}
\end{center}
Let $\beta_0 = (5,0)^T$ be the true parameter. The full model sampling distribution, $\hat \beta \sim \mathcal N(\beta_0, (X^T X)^{-1})$, is more concentrated around $\beta_0$ for higher sample sizes. Thus the depths at points along the (red) line $\beta_2=0$, and the origin (red point), become smaller, and mean depths approach 0 for $\cM_3$ and $\cM_4$. On the other hand, since depth functions are affine invariant \cite{ref:ZuoSerfling_AoS00461}, mean depths calculated over parameter spaces for $\cM_1$ (blue line) and $\cM_2$ (blue surface) remain the same, and do not vanish as $n \rightarrow \infty$. Thus, e-values of the `good' models $\cM_1, \cM_2$---models the parameter spaces of which contain $\beta_0$---separate from those of the `bad' models $\cM_3, \cM_4$ more and more as $n$ grows. Algorithm~\ref{alg:alg1} is the result of this separation, and a rank ordering of the `good' models based on how parsimonious they are (Theorem~\ref{Theorem:ThmRightNested}).

\section{Related work}
\label{sec:related}
Feature selection is a widely studied area in statistics and ML. The vast amount of literature on this topic includes classics like the Bayesian Information Criterion \citep[BIC]{ref:Schwarz_AoS78461} or Lasso \cite{Tibshirani96}, and recent advances such as Mixed Integer Optimization \citep[MIO]{BertsimasEtal16}. To deal with the increasing scale and complexity of data in recent applications, newer streams of work have also materialized---such as nonlinear and model-independent methods \cite{song12a}, model averaging \cite{FragosoEtal18}, and dimension reduction techniques \cite{MaZhu13}. We refer the reader to \citet{Guyon03} and the literature review in \citet{BertsimasEtal16} for a broader overview of feature selection methods, and focus on the three lines of work most related to our formulation.

\citet{ref:Shao_JASA96655} first proposed using bootstrap for feature selection,  with the squared residual averaged over a large number of resampled parameter estimates from a $m$-out-of-$n$ bootstrap as selection criterion. 
Leave-One-Covariate-Out \citep[LOCO]{loco} is based on the idea of sample-splitting. The full model and leave-one-covariate-out models are trained using one part of the data, and the rest of the data are used to calculate the importance of a feature as median difference in predictive performance between the full model and a LOCO model. 
Finally, \citet{knockoff15} introduced the powerful idea of Knockoff filters, where a `knockoff' version of the original input dataset imitating its correlation structure is constructed. 
This method is explicitly able to control False Discovery Rate.

Our framework of e-values has some similarities with the above methods, such as the use of bootstrap (similar to \citet{ref:Shao_JASA96655}), feature-specific statistics (similar to LOCO and Knockoffs), and evaluation at $p+1$ models (similar to LOCO). However, e-values are also significantly different. They do not suffer from the high computational burden of model refitting over multiple bootstrap samples (unlike \citet{ref:Shao_JASA96655}), are not conditional on data splitting (unlike LOCO), and have a very general theoretical formulation (unlike Knockoffs). Most importantly, unlike all the three methods discussed above, e-values require fitting only a single model, and work for dependent data models.

Previously, \citet{VanderweeleDing17} have used the term 'E-Value' in the context of sensitivity analysis. However, our and their definition of e-values are quite different. We see our e-values as a means to {\it evaluate} a feature with reference to a model. There are some parallels to the well known $p$-values used for hypothesis testing, but we see the e-value as a more general quantity that covers dependent data situations, as well as general estimation and hypothesis testing problems. While the current paper is an application of e-values for feature selection, we plan to build up on the framework introduced here on other applications, including group feature selection, hypothesis testing, and multiple testing problems.
\section{Preliminaries}
\label{sec:PrelimSection}

For a positive integer $n \geq 1$, Let $\cZ_n = \{ Z_1, \ldots, Z_n \}$ be an observable array of random variables that are not necessarily independent. We parametrize $\cZ_n$ using a parameter vector $\theta \in \Theta \subseteq \BR^{p}$, and energy functions $\{ \psi_{i} (\theta, Z_{i}): 1 \leq i \leq n \}$. 
%
%
We assume that there is a true unknown vector of parameters $\theta_{0}$, which is the unique minimizer of
%
$
\Psi_n (\theta) = \BE \sum_{i=1}^n \psi_{i} (\theta, Z_{i})
$. 
%

\paragraph{Models and their characterization}
Let $\cS_{*} = \cup_{ \theta \in \Theta} \supp(\theta)$ be the common non-zero support of all estimable parameters $\theta$ (denoted by $\supp(\theta)$). In this setup, we associate a model $\cM$ with two quantities (a) The set of indices $\cS \subseteq \cS_{*} $ with unknown and estimable parameter values, and (b) an ordered vector of known constants $C = (C_{j}: j \notin \cS)$ at indices not in $\cS$. 
%
%
Thus a generic parameter vector for the model $\cM \equiv (\cS, C)$, denoted by $\theta_{m} \in \Theta_{m} \subseteq \Theta = \prod_j \Theta_{j}$, consists of unknown indices and known constants
\baq 
 \vectheta_{m j} = \left\{ \begin{array}{ll}
 \text{ unknown} \ \theta_{ m j} \in \matTheta_{j} & \text{ for } 
 			j \in \cS, \\
 \text{ known} \  C_{j} \in \matTheta_{j} & \text{ for } j \notin \cS.
\end{array}
\right.
\label{eq:vectheta_mn}
\eaq

Given the above formulation, we characterize a model.

\begin{definition}
A model $\cM$ is called \textit{adequate} if $\sum_{j \notin \cS} | C_{j}  - {\theta}_{0 j} | = 0$. A model that is not adequate, is called an {\textit{inadequate}} model.
\end{definition}

By definition the full model is always adequate, as is the model corresponding to the singleton set containing the true parameter
. Thus the set of adequate models is non-empty by construction. 

Another important notion is the one of \textit{nested models}.
\begin{definition}
We consider a model $\cM_{1}$ to be nested in $\cM_{2}$, notationally $\cM_{1} \prec \cM_{2}$, if 
$\cS_{1} \subset \cS_{2}$ and $C_{2}$ is a subvector of $C_{1}$.
\end{definition}
\noindent If a model is adequate, then any model it is nested in is also adequate. In the context of feature selection this obtains a rank ordering: the most parsimonious adequate model, with $\cS = \supp (\beta_0)$ and $C = 0_{p - | \cS |}$, is nested in all other adequate models. All models nested {\it within} it are inadequate.

\paragraph{Estimators}
The estimator $\hat{\theta}_{*}$ of $\theta_{0}$ is obtained as a minimizer of the sample analog of $\Psi_n (\theta)$:
\baq
\hat{\theta}_{*} = 
\argmin_{\theta} {\Psi}_n (\theta) =
\argmin_{\theta} \sum_{i = 1}^{n} \psi_{i}  \bigl( \theta, Z_{i} \bigr).
\label{eq:Psistarnhat}
\eaq
%
Under standard regularity conditions on the energy functions, $a_n (\hat \theta_* - \theta_*)$ converges to a $p$-dimensional Gaussian distribution as $n \rightarrow \infty$, where $a_n \uparrow \infty$ is a sequence of positive real numbers (Appendix~\ref{app:boot}).

\begin{remark}
For generic estimation methods based on likelihood and estimating equations, the above holds with $a_{n} \equiv {n}^{1/2}$, resulting in the standard `root-$n$' asymptotics.
\end{remark}





The estimator in \eqref{eq:Psistarnhat} corresponds to the {\it full model} $\cM_{*}$, i.e. the model where all indices in $\cS_{*}$ are estimated. For any other model $\cM$,
%
%
we simply augment entries of $\hat \theta_{*}$ at indices in $\cS$ with elements of $C$ elsewhere to obtain a model-specific coefficient estimate $\hat\theta_m$:
\baq 
 \hat{\vectheta}_{m j} = \left\{ \begin{array}{ll}
 \hat{\theta}_{ * j} & \text{ for } j \in \cS, \\
 C_{j} & \text{ for } j \notin \cS.
\end{array}
\right.
\label{eq:thetahat_mn}
\eaq
The logic behind this plug-in estimate is simple: for a candidate model $\cM$, a joint distribution of its 
estimated parameters, i.e. $[\hat{\vectheta}_{\cS}]$, can actually be obtained from $[ \hat{\vectheta}_{*}] $ by taking its marginals at indices $\cS$.




\paragraph{Depth functions}
Data depth functions \citep{ref:ZuoSerfling_AoS00461} quantify the closeness of a point in multivariate space to a probability distribution or data cloud. Formally, let $\cG$ denote the set of non-degenerate probability measures on $\BR^{p}$. We consider $D : \BR^{p} \times \cG \raro [0, \infty)$ to be a data depth function if it satisfies the following properties:
\begin{enumerate}[nolistsep]
\item[(B1)]
{\it Affine invariance}: For any non-singular matrix $\bfA \in \BR^{p \times p}$, and $b \in \BR^p$ and random variable $Y$ having distribution $\BG \in \cG$, 
$
D (x, \BG) = D ( \bfA x + b, [ \bfA Y + b ])
$.
\item[(B2)]
{\it Lipschitz continuity}: For any $\BG \in \cG$, there exists $\delta > 0$ and $\alpha \in (0, 1)$, possibly depending on $\BG$ such that whenever  $| x - y | < \delta$, we have 
$ 
| D (x, \BG) - D (y, \BG) | < | x - y|^{\alpha} 
$.

\item[(B3)]
Consider random variables $Y_{n} \in \BR^{p}$ such that $[ Y_n] \leadsto \BY \in \cG$. Then $D (y, [Y_n])$  converges uniformly to $D (y, \BY) $. So that, if $Y \sim \BY$, then 
$
\lim_{n \raro \infty} \BE D (Y_n, [Y_n]) = \BE D (Y, \BY) < \infty
$.

\item[(B4)]
For any $\BG \in \cG$, $\lim_{\| \bfx \| \raro \infty} D( \bfx, \BG) = 0$.

\item[(B5)]
For any  $\BG \in \cG$ with a point of symmetry $\bfmu (\BG) \in \BR^{p}$, $D(.,\BG)$ is maximum at $\bfmu (\BG)$:
$$ D (\bfmu (\BG), \BG) =
\sup_{\bfx \in  \BR^{ p}} D ( \bfx, \BG) < \infty. $$
Depth decreases along any ray between $\bfmu (\BG)$ to $x$, i.e. for $ t \in (0, 1), \bfx \in  \BR^{p}$,
\begin{align*}
D ( \bfx, \BG) &< D (\bfmu (\BG) + t (\bfx - \bfmu (\BG)), \BG) \\
&< D (\bfmu (\BG), \BG).
\end{align*}

\end{enumerate}
Conditions (B1), (B4) and (B5) are integral to the formal definition of data depth \citep{ref:ZuoSerfling_AoS00461}, while (B2) and (B3) implicitly arise for several depth functions \citep{MoslerChapter13}. We require only a subset of (B1)-(B5) for the theory in this paper, but use data depths throughout for simplicity.
\section{The e-values framework}
\label{sec:evalueSection}
The {\it e-value} of model $\cM$ is the mean depth of $\hat \theta_{\cM}$ with respect to $[ \hat \theta ]$: $e_n (\cM) = \BE  D (  \hat \theta_{ m}, [\hat \theta_{*}] )$. 

\subsection{Resampling approximation of e-values}
\label{sec:BootSection}
Typically, the distributions of either of the random quantities $\hat \theta_{m}$ and $\hat \theta_{*}$, are unknown, and have to be elicited from the data. Because of the plugin method in \eqref{eq:thetahat_mn}, only $[ \hat \theta_{*}]$ needs to be approximated. We do this using Generalized Bootstrap \citep[GBS]{ref:CBose_AoS05414}. For an exchangeable array of  non-negative random variables independent of the data as resampling weights: $\cW_{r} = ( \BW_{r 1}, \ldots,  \BW_{r n})^{T} \in \BR^{n}$, the GBS estimator $\hat{\vectheta}_{r *}$ is the minimizer of 
\baq
{\Psi}_{r n} (\theta) 
= \sum_{i = 1}^{n} \BW_{r i}\psi_{i}  \bigl( \theta, Z_{i} \bigr).
\label{eq:Psisnhat_R}
\eaq

We assume the following conditions on the resampling weights and their interactions as $n \raro \infty$:
\baq
& \BE \BW_{r 1} = 1, 
\BV \BW_{r 1} = \tau_{n}^{2} = o ( a_{n}^{2}) \uparrow \infty,
\BE W_{r 1}^{4} < \infty,\notag\\
& \BE W_{r 1} W_{r 2} = O (n^{-1}), 
\BE W_{r 1}^{2} W_{r 2}^{2} \raro 1.
\label{eq:W_cond}
\eaq
%
%
Many resampling schemes can be described as GBS, such as the $m$-out-of-$n$ bootstrap \cite{BickelSakov08} and scale-enhanced wild bootstrap \cite{chatt19}. Under fairly weak regularity conditions on the first two derivatives $\psi'_{i}$ and $\psi''_{i}$ of $\psi_{i}$, $(a_n/\tau_n) (\hat \theta_{r *} - \hat \theta_{*})$ and $a_n (\hat \theta_{*} - \theta_{0})$ converge to the same weak limit in probability (See Appendix~\ref{app:boot}).

Instead of repeatedly solving \eqref{eq:Psisnhat_R}, we use model quantities computed while calculating $\hat \theta_{*}$ to obtain a first-order approximation of $\hat \theta_{r *}$. 
\baq\label{eqn:BootEqn}
\hat\theta_{r *} & = \hat\bftheta_{*} - \frac{ \tau_n}{a_n} \left[ \sum_{i=1}^n \psi_i'' (\hat \bftheta_{*}, Z_i) \right]^{-1/2}
\sum_{i=1}^{n} W_{r i} \psi_i' (\hat \bftheta_{*}, Z_i) \notag\\
\quad &+ \bfR_{r},
\eaq
where $\BE_r \| \bfR_{r} \|^2 = o_P(1)$, and $W_{r i} = (\BW_{r i} - 1)/\tau_n$. Thus only Monte Carlo sampling is required to obtain the resamples. Being an embarrassingly parallel procedure, this results in significant computational speed gains.

To estimate $e_n(\cM)$ we obtain two independent sets of weights $\{ \cW_{r}; r = 1, \ldots, R\}$ and $\{ \cW_{r_1}; r_1 = 1, \ldots, R_1 \}$ for large integers $R, R_1$. We use the first set of resamples to obtain the distribution $[\hat{\vectheta}_{r *}]$ to approximate $[ \hat \theta_{*} ]$, and the second set of resamples to get the plugin estimate $\hat{\vectheta}_{r_{1} m}$:
\begin{align}\label{eqn:bootEstEqn}
 \hat{\theta}_{r_{1} m j} = \left\{ \begin{array}{ll}
 	 \hat{\theta}_{r_1 * j} & \text{ for } 
 			j \in \cS, \\
 	 C_{j} & \text{ for } j \notin \cS.
\end{array}
\right.
\end{align}
Finally, the resampling estimate of a model e-value is: $
e_{r n} (\cM) =
\BE_{r_{1}} D \bigl(  \hat{\vectheta}_{r_{1} m},
[\hat{\vectheta}_{r *}] \bigr) 
$, 
%
%
where $\BE_{r_{1}}$ is expectation, conditional on the data, computed using the resampling $r_{1}$.

\subsection{Fast algorithm for best subset selection}
\label{subsec:Algo}
For best subset selection, we restrict to the model class 
$
\mathbb M_{0} = \{ \cM: C_j=0 \quad \forall \quad j \notin \cS \}
$ that only allows zero constant terms. In this setup our fast selection algorithm consists of only three stages: (a) fit the full model and estimate its e-value, (b) replace each covariate by 0 and compute  e-value of all such reduced models, and (c) collect covariates dropping which causes the e-value to go down.
%

To fit the full model, we need to determine the estimable index set $\cS_*$. When $n > p$, the choice is simple: $\cS_* = \{ 1, \ldots, p\}$. When $p > n$, we need to ensure that $p' = | \cS_* | < n$, so that $\hat \theta_{*}$ (properly centered and scaled) has a unique asymptotic distribution. Similar to past works on feature selection \cite{LaiHannigLee15}, we use a feature screening step before model fitting to achieve this (Section~\ref{sec:Theory}).

After obtaining $\cS_*$ and $\hat \theta_*$, for each of the $p'+1$ models under consideration: the full model and all drop-1-feature models, we follow the recipe in Section \ref{sec:BootSection} to get bootstrap e-values. This gives us all the components for a sample version of Algorithm \ref{alg:alg1}, which we present as Algorithm~\ref{alg:algoselectboot}.

\begin{algorithm}[H]
\caption{Best subset feature selection using e-values and GBS}
\label{alg:algoselectboot}
\noindent 1. Fix resampling standard deviation $\tau_n$.

\noindent 2. Obtain GBS samples: $\cT = \{ \hat\theta_{1 *}, ..., \hat\theta_{R *} \} $, and $\cT_1 = \{ \hat\theta_{1 *}, ..., \hat\theta_{R_1 *} \} $ using \eqref{eqn:BootEqn}.

\noindent 3. Calculate $\hat e_{r n} (\cM_*) = \frac{1}{R_1} \sum_{r_1=1}^{R_1} D ( \hat\theta_{r_1 *}, [\cT] )$.

\noindent 4. Set $\hat \cS_0 = \phi$.

\noindent 5. {\bf for} $j$ in $1:p$

\noindent 6. \hspace{1em} {\bf for} $r_1$ in $1:R_1$

\noindent 7. \hspace{2em} Replace $j^\text{th}$ index of $\hat\theta_{* r_1}$ by 0 to get $\hat \theta_{r_1, -j}$.

\noindent 8. \hspace{1em} Calculate $\hat e_{r n} (\cM_{-j}) = \frac{1}{R_1} \sum_{r_1=1}^{R_1} D ( \hat\theta_{r_1, -j}, [\cT] )$.

\noindent 9. \hspace{1em} {\bf if} $\hat e_{r n} (\cM_{-j}) < \hat e_{r n} (\cM_*)$

\noindent 10. \hspace{2em} Set $ \hat \cS_0 \leftarrow \{ \hat \cS_0, j \} $.
\end{algorithm}

\paragraph{Choice of $\tau_n$}
The intermediate rate of divergence for the bootstrap standard deviation $\tau_n$: $\tau_n \rightarrow \infty, \tau_n/a_n \rightarrow 0$, is a necessary and sufficient condition for the consistency of GBS \citep{ref:CBose_AoS05414}, and that of the bootstrap approximation of population e-values (Theorem~\ref{thm:BootConsistency}). 
We select $\tau_n$ using the following quantity, which we call Generalized Bootstrap Information Criterion (GBIC):
\begin{align}\label{eqn:biceqn}
\text{GBIC}(\tau_n) &= \sum_{i=1}^n \psi_{i} \left( \hat \theta(\hat \cS_0,\tau_n), Z_{i} \right) + \notag\\
& \frac{\tau_n}{2} \left| \supp(\hat \theta(\hat \cS_0,\tau_n)) \right|,
\end{align}
where $\hat \theta(\hat \cS_0,\tau_n)$ is the refitted parameter vector using the index set $\hat \cS_0$ selected by running Algorithm~\ref{alg:algoselectboot} with $\tau_n$. We repeat this for a range of $\tau_n$ values, and choose the index set corresponding to the $\tau_n$ that gives the smallest $\text{GBIC}(\tau_n)$. For our synthetic data experiments we take $\tau_n = \log(n)$ and use GBIC, while for one real data example, we use validation on a test set to select the optimal $\tau_n$, both with favorable results.



\paragraph{Detection threshold for finite samples}
\label{subsec:threshold}
In practice---especially for smaller sample sizes---
due to sampling uncertainty it may be difficult to ensure that the full model e-value exactly separates the null and non-null e-values
, and small amounts of false positives or negatives may occur in the selected feature set $\hat \cS_0$. One way to tackle this is by shifting the detection threshold in Algorithm~\ref{alg:algoselectboot} by a small $\delta$:
$$ \hat e^\delta_{rn} (\cM_*) = (1+\delta) \hat e_{rn} (\cM_*). $$
To prioritize true positives or true negatives, $\delta$ can be set to be positive or negative respectively. 
In one of our experiments (Section~\ref{subsec:sim3}), setting $\delta = 0$ results in a small amount of false positives due to a few non-null features having e-values close to the full model e-value. Setting $\delta$ to small positive values gradually eliminates these false positives.


\section{Theoretical results}
\label{sec:Theory}

We now investigate theoretical properties of e-values. Our first result separates inadequate models from adequate models at the population level, and gives a rank ordering of adequate models using their population e-values.

\begin{theorem}
\label{Theorem:ThmRightNested}
Under conditions B1-B5, for a finite sequence of adequate models $\cM_{1} \prec  \ldots \prec \cM_{k}$ and any inadequate models $\cM_{k +1}, \ldots, \cM_{K}$, we have for large $n$
\ban 
 e_n(\cM_{1} ) > \ldots > e_n (\cM_{k})  > \max_{j \in \{ k + 1, \ldots K \}} e_n (\cM_{j}).
\ean
As $n \rightarrow \infty$, $e_n (\cM_i) \rightarrow \BE D(Y, [Y]) < \infty$ with $Y$ having an elliptic distribution if $i \leq K$, else $e_n (\cM_i) \rightarrow 0$.
\end{theorem}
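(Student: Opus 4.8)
The plan is to reduce everything to the behavior of the mean depth $\BE D(\hat\theta_m, [\hat\theta_*])$ as a function of (i) which parameter indices are frozen to constants and (ii) the concentration of the sampling distribution $[\hat\theta_*]$ as $n$ grows. First I would fix notation: write $F_n$ for the distribution of $a_n(\hat\theta_* - \theta_0)$, which by the regularity conditions converges weakly to an elliptic (Gaussian) limit $\BY$ centered at $0$, with $\theta_0$ the point of symmetry of $[\hat\theta_*]$ for large $n$. For any model $\cM = (\cS, C)$ the plug-in estimate $\hat\theta_m$ agrees with $\hat\theta_*$ on $\cS$ and equals $C$ off $\cS$; since we are in the class $\mathbb M_0$ in the relevant algorithm, $C = 0$, but the argument below only uses $\sum_{j\notin\cS}|C_j - \theta_{0j}|$ being zero or positive.

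The core dichotomy comes from combining (B4), (B5) and the affine invariance (B1). If $\cM$ is \emph{inadequate}, then $\hat\theta_m$ differs from $\theta_0$ by a fixed nonzero offset $v = (C - \theta_0)$ restricted to the frozen coordinates, which does not shrink with $n$; rescaling by $a_n$ and using (B1) to move to the limit picture, the relevant point $a_n(\hat\theta_m - \theta_0)$ has norm $\to\infty$, so by (B4) together with the uniform convergence in (B3) one gets $\BE D(\hat\theta_m, [\hat\theta_*]) \to 0$. If $\cM$ is \emph{adequate}, then $\hat\theta_m - \theta_0$ is supported on $\cS$ and is of the same stochastic order $a_n^{-1}$ as $\hat\theta_* - \theta_0$; after rescaling, $a_n(\hat\theta_m - \theta_0)$ converges to the marginal of the elliptic limit $\BY$ on the coordinates $\cS$ (zero elsewhere), which is a genuine point in $\BR^p$, and (B3) gives $e_n(\cM)\to \BE D(Y_\cM, \BY)$ for the appropriate limiting random vector $Y_\cM$, a finite positive constant. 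In the fully nested/true-singleton case this is exactly $\BE D(Y,[Y])$ with $Y\sim\BY$, which is the stated limit.

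To get the strict ordering $e_n(\cM_1) > \cdots > e_n(\cM_k)$ among adequate models along a nesting chain, I would argue at the level of the limiting depths and then transfer to finite $n$ by the uniform convergence (B3). Nesting $\cM_{i}\prec\cM_{i+1}$ means $\cS_i\subset\cS_{i+1}$: the reduced model $\cM_i$ freezes \emph{more} coordinates to their true values. Conditioning the elliptic limit on the extra frozen coordinates being exactly at their symmetry-center value can only move mass toward the center; more precisely, writing the depth evaluation for $\cM_i$ as depth of a point whose frozen-coordinate block is pinned at the mode versus $\cM_{i+1}$ whose corresponding block is random around the mode, property (B5) — depth strictly decreasing along rays from the center — yields, after integrating, $\BE D(Y_{\cM_i}, \BY) > \BE D(Y_{\cM_{i+1}}, \BY)$. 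Chaining this gives the strict decrease, and since each adequate-model limit is a strictly positive constant while each inadequate-model limit is $0$, for $n$ large enough the finite-sample e-values inherit both the strict chain inequalities and the separation from $\max_{j>k} e_n(\cM_j)$.

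The main obstacle I expect is the strict inequality step for \emph{two adequate models} that are \emph{not} comparable by nesting but both appear among $\cM_1,\dots,\cM_k$ — and more delicately, making the ``pinning a coordinate at the mode strictly increases mean depth'' claim rigorous without extra symmetry/unimodality assumptions beyond (B5). The ray-monotonicity in (B5) controls depth along a segment through the center, but here the two points being compared differ in a whole coordinate block, not along a single ray; I would handle this by decomposing into one coordinate at a time (peeling off the extra frozen indices in $\cS_{i+1}\setminus\cS_i$ one by one), at each step comparing $D$ at a point with its $j$-th coordinate set to the conditional mode versus left random, and invoking (B5) coordinatewise together with affine invariance to diagonalize the elliptic limit. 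Getting the finite-$n$ statement then needs only that (B3)'s convergence is uniform enough to preserve a finite set of strict inequalities, which is immediate once the limits are strictly ordered. A secondary technical point is ensuring the offset in the inadequate case genuinely scales like $a_n$ (not slower), which is where $a_n\uparrow\infty$ and the fixed nonzero gap $\sum_{j\notin\cS}|C_j-\theta_{0j}|>0$ are used.
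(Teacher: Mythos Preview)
Your approach is essentially the paper's: affine-invariance to standardize, (B5) ray-monotonicity (equivalently, depth as a decreasing function of norm in the spherical picture) to get the strict ordering among nested adequate models, (B4) plus the $a_n$-blowup of the fixed nonzero offset for inadequate models, and (B3) to pass between finite $n$ and the limit. Your ``peel off one frozen coordinate at a time after diagonalizing the elliptic limit'' is exactly how the paper's Part~1 works once it standardizes to $\cE(0,I_p,g)$ and compares $\|\theta_{10}\|$ versus $\|\theta_{1\delta}\|$.

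The one real point of departure is the limiting value for adequate models. You write that each adequate $\cM$ has its own limit $\BE D(Y_{\cM},\BY)$ with $Y_{\cM}$ the marginal-on-$\cS$, zero-elsewhere vector, and you only identify this with the stated $\BE D(Y,[Y])$ for the full model. The theorem, however, asserts a \emph{common} limit for all adequate models, and the paper's Part~2 proves it differently: it does \emph{not} rescale by $a_n$, writes $\hat\theta_m-\theta_0=(\hat\theta_*-\theta_0)+R_n$ with $\BE\|R_n\|^2=O(a_n^{-2})$, and invokes the Lipschitz property (B2) to bound $|D(\hat\theta_m-\theta_0,[\hat\theta_*-\theta_0])-D(\hat\theta_*-\theta_0,[\hat\theta_*-\theta_0])|\le\|R_n\|^\alpha\to 0$. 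You never use (B2), and your route via $a_n$-rescaling cannot reach this conclusion because $a_n R_n$ is $O_P(1)$, not $o_P(1)$. So as written your argument establishes the strict chain and the inadequate limit, but not the common-limit clause of the theorem; to match the statement you would need to add the paper's unscaled Lipschitz step. (Conversely, your distinct-limits picture makes the finite-$n$ strict ordering an immediate consequence of convergence, which is arguably cleaner than the paper's Part~1 transfer---but it is not what the theorem says.)
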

%
We define the \textit{data generating model} as $\cM_0 \prec \cM_*$ with estimable indices $\cS_0 = \supp (\theta_0)$ and constants $C_0 = 0_{p - | \cS_0|}$. Then we have the following.
\begin{corollary}\label{Corollary:ZeroModelCorollary}
Assume the conditions of Theorem~\ref{Theorem:ThmRightNested}. Consider the restricted class of candidate models $\mathbb M_{0}$ in Section~\ref{subsec:Algo}, where all known coefficients are fixed at 0. Then for large enough $n$, $\cM_{0} = \arg \max_{ \cM \in \mathbb M_{0}} [ e_{n} ( \cM ) ]$.
%
\end{corollary}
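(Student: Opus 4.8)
The plan is to derive Corollary~\ref{Corollary:ZeroModelCorollary} from Theorem~\ref{Theorem:ThmRightNested} by reducing the maximization over $\mathbb M_0$ to finitely many pairwise comparisons, each of which the theorem already resolves. The first observation is that $\mathbb M_0$ is a \emph{finite} set: a model in $\mathbb M_0$ is determined entirely by its estimable index set $\cS \subseteq \cS_*$ (every omitted coefficient is pinned to $0$), so $|\mathbb M_0| = 2^{|\cS_*|}$. Hence it is enough to exhibit a single sample size $N$, valid for the whole class, beyond which $e_n(\cM_0) > e_n(\cM)$ for every $\cM \in \mathbb M_0$ with $\cM \neq \cM_0$; the global threshold is then the maximum of the finitely many thresholds that the theorem supplies.

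Next I would pin down what adequacy and nesting mean inside $\mathbb M_0$. For $\cM = (\cS, 0) \in \mathbb M_0$ the adequacy condition $\sum_{j \notin \cS} |C_j - \theta_{0j}| = 0$ becomes $\sum_{j \notin \cS} |\theta_{0j}| = 0$, i.e. $\theta_{0j} = 0$ for all $j \notin \cS$, which is exactly $\cS_0 = \supp(\theta_0) \subseteq \cS$. So the adequate members of $\mathbb M_0$ are precisely those whose estimable set contains $\cS_0$. For any such $\cM \neq \cM_0$ we have $\cS_0 \subsetneq \cS$, while the constant vector of $\cM$ (zeros on $\cS_* \setminus \cS$) is a subvector of $C_0$ (zeros on $\cS_* \setminus \cS_0 \supseteq \cS_* \setminus \cS$); by the definition of nesting this gives $\cM_0 \prec \cM$. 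In particular $\cM_0$ is the unique minimal element of the (partially ordered) family of adequate models in $\mathbb M_0$, and $\cM_0$ itself lies in $\mathbb M_0$ since $C_0 = 0_{p - |\cS_0|}$.

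Then I would split the comparison into two cases. If $\cM \in \mathbb M_0$ is adequate with $\cM \neq \cM_0$, apply Theorem~\ref{Theorem:ThmRightNested} to the two-element chain $\cM_0 \prec \cM$ with no inadequate models (the trailing maximum is then over an empty set and the last inequality is vacuous), obtaining a threshold $n_\cM$ with $e_n(\cM_0) > e_n(\cM)$ for $n \geq n_\cM$. If instead one lists all inadequate members of $\mathbb M_0$, apply the theorem once with the singleton adequate chain $\cM_0$ together with those inadequate models, which yields a threshold $n'$ with $e_n(\cM_0) > e_n(\cM)$ for every inadequate $\cM \in \mathbb M_0$ and all $n \geq n'$. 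Taking $N$ to be the maximum of $n'$ and the finitely many $n_\cM$, we get that for all $n \geq N$ the value $e_n(\cM_0)$ strictly dominates every other e-value in $\mathbb M_0$, hence $\cM_0 = \argmax_{\cM \in \mathbb M_0} e_n(\cM)$.

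The genuine content — the strict ordering of population e-values along a nested chain and the separation of adequate from inadequate models — is carried entirely by Theorem~\ref{Theorem:ThmRightNested}, so there is no new asymptotic analysis to perform. The one point requiring a little care, and the closest thing to an obstacle, is that the adequate models of $\mathbb M_0$ form a lattice rather than a single chain, so Theorem~\ref{Theorem:ThmRightNested} must be invoked on each relevant sub-chain (equivalently on each pair $\cM_0 \prec \cM$) rather than once globally; it is the finiteness of $\mathbb M_0$ that lets these finitely many ``for large $n$'' statements be amalgamated into a single threshold $N$.
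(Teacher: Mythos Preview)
Your proposal is correct and follows essentially the same route as the paper: both arguments rest on the observation that $\cM_0$ is nested in every other adequate model of $\mathbb M_0$, then invoke Theorem~\ref{Theorem:ThmRightNested} to obtain the strict ordering. You are more careful than the paper in two respects---you make explicit the finiteness of $\mathbb M_0$ needed to amalgamate the ``for large $n$'' thresholds, and you note that the adequate models form a lattice rather than a single chain so the theorem is applied pairwise---but these are refinements of the same argument, not a different one.
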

Thus, when only the models with known parameters set at 0 (the model set $\mathbb M_0$) are considered, e-value indeed maximizes at the true model at the {\it population level}. However there are still $2^p$ possible models. This is where the advantage of using e-values---their one-step nature---comes through.

\begin{corollary}\label{Corollary:AlgoCorollary}
Assume the conditions of Corollary~\ref{Corollary:ZeroModelCorollary}. Consider the models $\cM_{ -j} \in \mathbb M_0$ with  $\cS_{-j} = \{ 1, \ldots ,p \} \setminus \{ j \}$ for $j = 1, \ldots ,p$. Then covariate $j$ is a necessary component of $\cM_0$, i.e. $\cM_{-j}$ is an inadequate model, if and only if for large $n$ we have $e_{n} (\cM_{-j}) < e_{n} (\cM_*)$.
%
%
\end{corollary}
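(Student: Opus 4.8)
The plan is to derive this as a direct application of Theorem~\ref{Theorem:ThmRightNested} together with the nesting structure inside $\mathbb M_0$. First I would observe that every drop-one model $\cM_{-j}$ lies in $\mathbb M_0$ (its known coefficient, the $j$-th one, is fixed at $0$), and that $\cM_{-j} \prec \cM_*$ since $\cS_{-j} = \{1,\ldots,p\}\setminus\{j\} \subset \cS_* = \{1,\ldots,p\}$ and the constant vector $C_* = \emptyset$ (empty) is trivially a subvector of $C_{-j} = (0)$. So $\cM_{-j}$ and $\cM_*$ always form a nested pair, and the full model $\cM_*$ is adequate by construction. The whole question is then whether $\cM_{-j}$ itself is adequate.

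Next I would split into the two directions. For the ``only if'' direction, suppose covariate $j$ is necessary, i.e. $j \in \supp(\theta_0) = \cS_0$, so that $\theta_{0j} \neq 0$. Then $\cM_{-j}$ sets index $j$ to the constant $0 \neq \theta_{0j}$, hence $\sum_{i \notin \cS_{-j}} |C_i - \theta_{0i}| = |\theta_{0j}| > 0$ and $\cM_{-j}$ is inadequate. Applying Theorem~\ref{Theorem:ThmRightNested} with the single adequate model $\cM_*$ in the chain and $\cM_{-j}$ as an inadequate model gives $e_n(\cM_*) > e_n(\cM_{-j})$ for large $n$, which is exactly $e_n(\cM_{-j}) < e_n(\cM_*)$. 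Conversely, for the ``if'' direction I would argue the contrapositive: if covariate $j$ is \emph{not} necessary, i.e. $\theta_{0j} = 0$, then $C_j = 0 = \theta_{0j}$, so $\cM_{-j}$ is adequate, and moreover $\cM_{-j}$ and $\cM_*$ are \emph{both} adequate with $\cM_{-j} \prec \cM_*$. Theorem~\ref{Theorem:ThmRightNested} applied to the two-element adequate chain $\cM_{-j} \prec \cM_*$ then yields the strict inequality $e_n(\cM_{-j}) > e_n(\cM_*)$ for large $n$, contradicting $e_n(\cM_{-j}) < e_n(\cM_*)$. Hence $e_n(\cM_{-j}) < e_n(\cM_*)$ forces $j$ to be necessary.

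One technical point I would be careful about is the uniformity of ``large $n$'' across the $p$ statements: Theorem~\ref{Theorem:ThmRightNested} gives a threshold $n$ for each fixed collection of models, and here we invoke it $p$ times (once per $j$), so I would simply take the maximum of the finitely many thresholds, which is harmless since $p$ is fixed. I do not expect a real obstacle here — the corollary is essentially a bookkeeping consequence of the theorem — but if anything the mild subtlety is in checking the nesting definition in the degenerate case where $C_*$ is the empty vector and confirming that ``adequate $\Rightarrow$ any model it is nested in is adequate'' (stated after the definition of nested models) is being used in the correct direction; I would state that containment explicitly rather than lean on intuition.
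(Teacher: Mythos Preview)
Your proposal is correct and follows essentially the same route as the paper: both directions are obtained by applying Theorem~\ref{Theorem:ThmRightNested} to the nested pair $\cM_{-j} \prec \cM_*$, using inadequacy of $\cM_{-j}$ when $j \in \cS_0$ for the forward direction and adequacy of $\cM_{-j}$ when $j \notin \cS_0$ (via contrapositive) for the reverse. Your additional remark on taking the maximum of the finitely many thresholds to get a single ``large $n$'' is a nice point of care that the paper leaves implicit.
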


Dropping an essential feature from the full model makes the model inadequate, which has very small e-value for large enough $n$, whereas dropping a non-essential feature increases the e-value (Theorem~\ref{Theorem:ThmRightNested}). Thus, {\it simply collecting features dropping which cause a decrease in the e-value suffices for feature selection}.



Following the above results, we establish model selection consistency of Algorithm~\ref{alg:algoselectboot} at the sample level. This means that the probability that the one-step procedure is able to select the true model feature set goes to 1, when the procedure is repeated for a large number of randomly drawn datasets from the data-generating distribution.
\begin{theorem}
\label{thm:BootConsistency}
Consider two sets of generalized bootstrap estimates of $\hat \theta_*$: $\cT = \{ \hat \theta_{r *}: r = 1, \ldots, R \}$ and $\cT_1 = \{ \hat \theta_{r_1 *}: r_1 = 1, \ldots, R_1 \}$. Obtain sample e-values as:
\baq\label{eqn:BootEvalueMC}
\hat e_{r n} (\cM) & = \frac{1}{R_1} \sum_{r_1=1}^{R_1} D( \hat \theta_{r_1 m}, [ \cT ]),
\eaq
where $[\cT]$ is the empirical distribution of the corresponding bootstrap samples, and $\hat \theta_{r_1 m}$ are obtained as in Section~\ref{sec:BootSection}. Consider the feature set $\hat \cS_0 = \{ \hat e_{r n} (\cM_{-j} ) < \hat e_{r n} (\cM_*) \}$. Then as $n, R, R_1 \rightarrow \infty$, $P_2( \hat \cS_0 = \cS_0 ) \rightarrow 1$, with $P_2$ as probability conditional on data and bootstrap samples.
%
%
\end{theorem}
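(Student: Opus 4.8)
The plan is to bootstrap Theorem~\ref{Theorem:ThmRightNested} and Corollaries~\ref{Corollary:ZeroModelCorollary}–\ref{Corollary:AlgoCorollary} from the population level to the sample level by showing that the sample e-value $\hat e_{rn}(\cM)$ converges (conditionally on the data and resampling weights) to the population e-value $e_n(\cM)$ uniformly over the $p'+1$ models in play, and then arguing that the strict separation guaranteed at the population level is preserved under small perturbations. First I would set up two sources of approximation error. The first is the Monte Carlo error from using only $R_1$ replicates $\hat\theta_{r_1 m}$ in the outer average \eqref{eqn:BootEvalueMC} in place of the exact conditional expectation $\BE_{r_1} D(\hat\theta_{r_1 m},[\cT])$; since $D$ is bounded (a consequence of (B3) and (B5), or more directly of $D$ mapping into $[0,\infty)$ with a finite supremum on any fixed $\BG\in\cG$), a law of large numbers / Hoeffding-type bound gives that this vanishes in $P_2$-probability as $R_1\to\infty$, uniformly over the finitely many $j$. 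The second is the error from replacing the true sampling distribution $[\hat\theta_*]$ by the empirical bootstrap distribution $[\cT]$: here I would invoke the GBS consistency statement cited from \cite{ref:CBose_AoS05414} and recalled in Appendix~\ref{app:boot}, namely that $(a_n/\tau_n)(\hat\theta_{r*}-\hat\theta_*)$ and $a_n(\hat\theta_*-\theta_0)$ share a weak limit in probability, so that $[\cT]\leadsto$ the (appropriately scaled) Gaussian limit; then property (B3) upgrades this weak convergence to uniform convergence of $D(\cdot,[\cT])$ to $D(\cdot,\BY)$ and hence convergence of the mean depths. Combining, $\hat e_{rn}(\cM)\to e_n(\cM)$ in $P_2$-probability for each of the $p'+1$ models, and by the triangle inequality the convergence is uniform over this finite collection.

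Next I would feed this uniform consistency into the separation structure. By Corollary~\ref{Corollary:AlgoCorollary}, for all $n$ large enough there is a gap: $e_n(\cM_{-j}) < e_n(\cM_*)$ whenever $j\in\cS_0$, and $e_n(\cM_{-j}) > e_n(\cM_*)$ whenever $j\notin\cS_0$ (the latter because dropping a non-essential feature yields an adequate model strictly more parsimonious than $\cM_*$, which has strictly larger e-value by Theorem~\ref{Theorem:ThmRightNested}). Because there are only finitely many indices $j$, for each fixed large $n$ we may set $\gamma_n = \tfrac12\min_{j}\bigl| e_n(\cM_{-j}) - e_n(\cM_*)\bigr| > 0$; on the event that $\max\{\max_j|\hat e_{rn}(\cM_{-j})-e_n(\cM_{-j})|,\ |\hat e_{rn}(\cM_*)-e_n(\cM_*)|\} < \gamma_n$, every sample comparison in step~9 of Algorithm~\ref{alg:algoselectboot} agrees with the corresponding population comparison, so $\hat\cS_0 = \cS_0$ exactly. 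The uniform-consistency conclusion of the previous paragraph makes the probability of this event tend to $1$ as $R,R_1\to\infty$, and then letting $n\to\infty$ (so that the population separation is in force and $\gamma_n$ is eventually bounded below by the asymptotic gap implied by Theorem~\ref{Theorem:ThmRightNested}, where inadequate-model e-values tend to $0$ while adequate ones tend to $\BE D(Y,[Y])>0$) yields $P_2(\hat\cS_0=\cS_0)\to 1$. One should be mildly careful about the order of limits; the cleanest route is to state the result for $n,R,R_1\to\infty$ jointly and note that the event bound holds once $n$ is large enough that $\gamma_n$ exceeds a fixed positive constant and $R,R_1$ are large enough that each approximation error is below that constant with high probability.

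The main obstacle I anticipate is the second approximation step — controlling $D(\cdot,[\cT]) - D(\cdot,\BY)$ — done carefully enough that the plug-in argument goes through. Two subtleties need attention. First, the points being evaluated are not fixed: $\hat\theta_{r_1 m}$ is itself random (a marginal/zero-padded version of a bootstrap draw), so one genuinely needs the \emph{uniform} convergence in (B3) rather than pointwise convergence, together with tightness of $[\hat\theta_{r_1 m}]$ so that the relevant mass stays in a region where the uniform bound bites; property (B4) (depth vanishing at infinity) plus the Gaussian tails of the bootstrap distribution handle the tail contribution. Second, for the $p>n$ regime the statement implicitly relies on the screening step having fixed the estimable set $\cS_*$ correctly with probability tending to one (referenced via \cite{LaiHannigLee15} and Section~\ref{sec:Theory}); I would either assume screening consistency as a hypothesis or absorb it as an extra event of probability $\to 1$ intersected with the event above. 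Everything else — the Monte Carlo bound, the finite union over $j$, and the conversion of population strict inequalities into sample ones — is routine once the uniform depth-convergence lemma is in hand.
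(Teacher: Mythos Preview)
Your proposal is correct and follows essentially the same route as the paper: establish that the sample e-values converge to the population e-values (the paper packages the bootstrap-vs-population step as a separate lemma showing $e_{rn}(\cM)=e_n(\cM)+o_P(1)$ via the GBS scaling and $(a_n/\tau_n)(\hat\theta_m-\hat\theta_*)\stackrel{P}{\to}0$, then handles the finite-$R,R_1$ Monte Carlo step with a one-line ``similarly''), and then invoke the population-level separation from Corollary~\ref{Corollary:AlgoCorollary}. Your explicit gap $\gamma_n$ argument, tightness remark, and order-of-limits discussion are more detailed than the paper's terse treatment, but the overall strategy is identical.
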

Theorem~\ref{thm:BootConsistency} is contingent on the fact that the the true model $\cM_0$ is a member of $\BM_0$, that is $\cS_0 \subseteq \cS_*$. This is ensured trivially when $n \geq p $. If $p > n$, $\BM_0$ is the set of all possible models on the feature set selected by an appropriate screening procedure. In high-dimensional linear models, we use Sure Independent Screening \citep[SIS]{FanLv08} for this purpose. Given that $\cS_*$ is selected using SIS, \citet{FanLv08} proved 
that, for constants $C>0$ and $\kappa$ that depend on the minimum signal in $\theta_0$,
\baq\label{eqn:SISConsistency}
P( \cM_0 \in \BM_0) \geq 1 - O \left( \frac{\exp[ -C n^{1-2 \kappa} ]}{\log n} \right).
\eaq
For more complex models, model-free filter methods \cite{KollerSahami96,ZhuEtal11} can be used to obtain $\cS_*$. For example, under mild conditions on the design matrix, the method of \citet{ZhuEtal11} is consistent:
\baq\label{eqn:ZhuConsistency}
P( | \cS_* \cap \cS_0^c | \geq r ) \leq \left( 1 - \frac{r}{p+d} \right)^d,
\eaq
with positive integers $r$ and $d$: $d = p$ being a good practical choice \citep[Theorem 3]{ZhuEtal11}. Combining \eqref{eqn:SISConsistency} or \eqref{eqn:ZhuConsistency} with Corollary~\ref{thm:BootConsistency} as needed establishes asymptotically accurate recovery of 
$\cS_0$ through Algorithm~\ref{alg:algoselectboot}.

\section{Numerical experiments}
\label{Section:Simulation}
We implement e-values using a GBS with scaled resample weights $W_{r i} \sim \text{Gamma}(1, 1)-1$, and resample sizes $R = R_{1} = 1000$. We use Mahalanobis depth for all depth calculations. Mahalanobis depth is much less computation-intensive than other depth functions \citep{DyckMoz16,LiuZuo14}, but is not usually preferred in applications due to its non-robustness. However, we do not use any robustness properties of data depth, so are able to use it without any concern. For each replication for each data setting and method, we compute performance metrics on test datasets of the same dimensions as the respective training dataset. All our results are based on 1000 such replications.

\subsection{Feature selection in linear regression}
\label{subsec:sim1}
Given a true coefficient vector $\beta_0$, we use the model $Z \equiv (Y,X), Y = X \beta_0 + \epsilon$, with $\epsilon \sim \cN_n(0, \sigma^2 I_n)$, and $n=500$ and $p = 100$. We generate the rows of $X$ independently from $\cN_p(0, \Sigma_X)$, where $\Sigma_X$ follows an equicorrelation structure having correlation coefficient $\rho$: $(\Sigma_X)_{ij} = \rho^{|i-j|}$. Under this basic setup, we consider the following settings to evaluate the effect of different magnitudes of feature correlation in $X$, sparsity level in $\beta_0$ and error variance $\sigma$.

\begin{itemize}[nolistsep,leftmargin=*]
\item{Setting 1 (effect of feature correlation):} We repeat the above setup for $\rho = 0.1, \ldots, 0.9$, fixing $\beta_0 = (1.5,0.5,1,1.5,1, 0_{p-5})$, $\sigma = 1$;
\item{Setting 2 (effect of sparsity level):} We consider $\beta_0 = (1_k, 0_{p-k})$, with varying degrees of the sparsity level $k = 5,10,15,20,25$. We fix $\rho = 0.5, \sigma = 1$;
\item{Setting 3 (effect of noise level):} We consider different values of noise level (equivalent to having different signal-to-noise ratios) by testing for $\sigma = 0.3, 0.5, \ldots, 2.3$, fixing $\beta_0 = (1_5, 0_{p-5}), \rho = 0.5$.
\end{itemize}

\begin{figure}[t!]
\centering
\includegraphics[width=1\columnwidth]{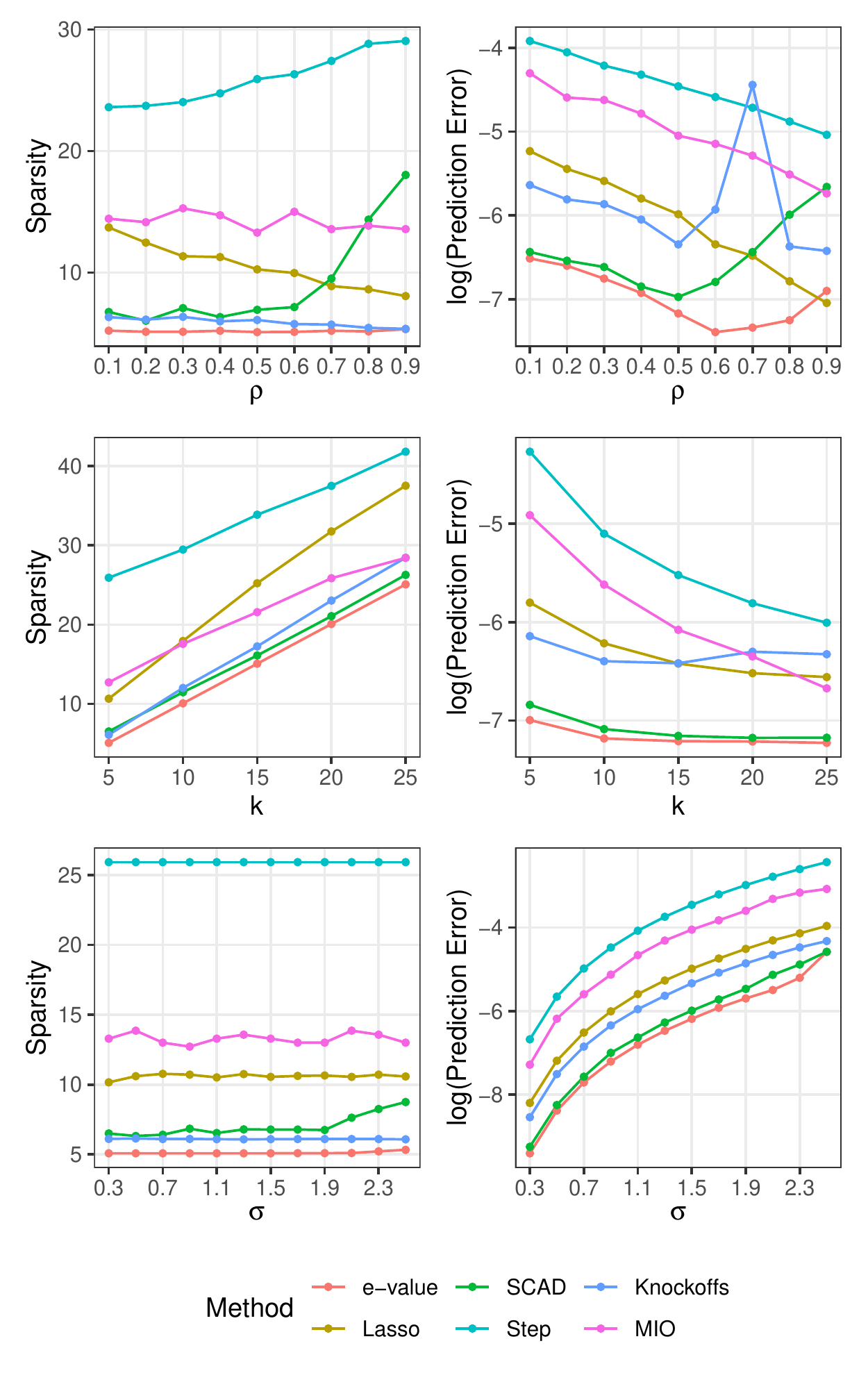}
\caption{
Performance metrics for $n=500, p=100$: top, middle and bottom rows indicate simulation settings 1, 2 and 3, respectively. Competing methods include LASSO and SCAD-penalized regression, Stepwise regression using BIC and backward deletion, Knockoff filters \cite{knockoff15}, and Mixed Integer Optimization \citep[MIO]{BertsimasEtal16}.
}
\label{fig:simplotsgamma}
\end{figure}


To implement e-values, we repeat Algorithm~\ref{alg:algoselectboot} for $ \tau_n \in \{ 0.2, 0.6, 1, 1.4, 1.8 \} \log (n) \sqrt{\log(p)} $, and select the model having the lowest GBIC$(\tau_n)$.


Figure~\ref{fig:simplotsgamma} summarizes the comparison results. Across all three settings, our method consistently produces the most predictive model, i.e. the model with the lowest prediction error. It also produces the sparsest model almost always. 
Among the competitors, SCAD performs the closest to e-values in setting 2 for both the metrics. However, SCAD seems to be more severely affected by high noise level (i.e. high $\sigma$) or high feature correlation (i.e. high $\rho$). Lasso and Step tend to select models with many null variables, and have higher prediction errors. Performance of the other two methods (MIO, knockoffs) is middling. 

\begin{table}[h!]
\centering
\scalebox{.85}{
    \begin{tabular}{l|llllll}
    \hline
    Method     & e-value & Lasso & SCAD & Step & Knockoff & MIO   \\\hline
    Time & 6.3    & 0.4  & 0.9 & 20.1 & 1.9       & 127.2 \\\hline
    \end{tabular}
    }
    \caption{Mean computation time (in seconds) for all methods.}
    \label{table:comptable}
\end{table}

We present computation times for Setting 1 averaged across all values of $\rho$ in Table~\ref{table:comptable}. All computations were performed on a Windows desktop with an 8-core Intel Core-i7 6700K 4GHz CPU and 16GB RAM. For e-values, using a smaller number of bootstrap samples or running computations over bootstrap samples in parallel greatly reduces computation time with little to no loss of performance. However, we report its computation time over a single core similar to other methods. Sparse methods like Lasso and SCAD are much faster as expected. Our method has lower computation time than Step and MIO, and much better performance. 



\subsection{High-dimensional linear regression ($p > n$)}
\label{subsec:sim2}

\begin{figure}[t!]
\centering
\includegraphics[width=1\columnwidth]{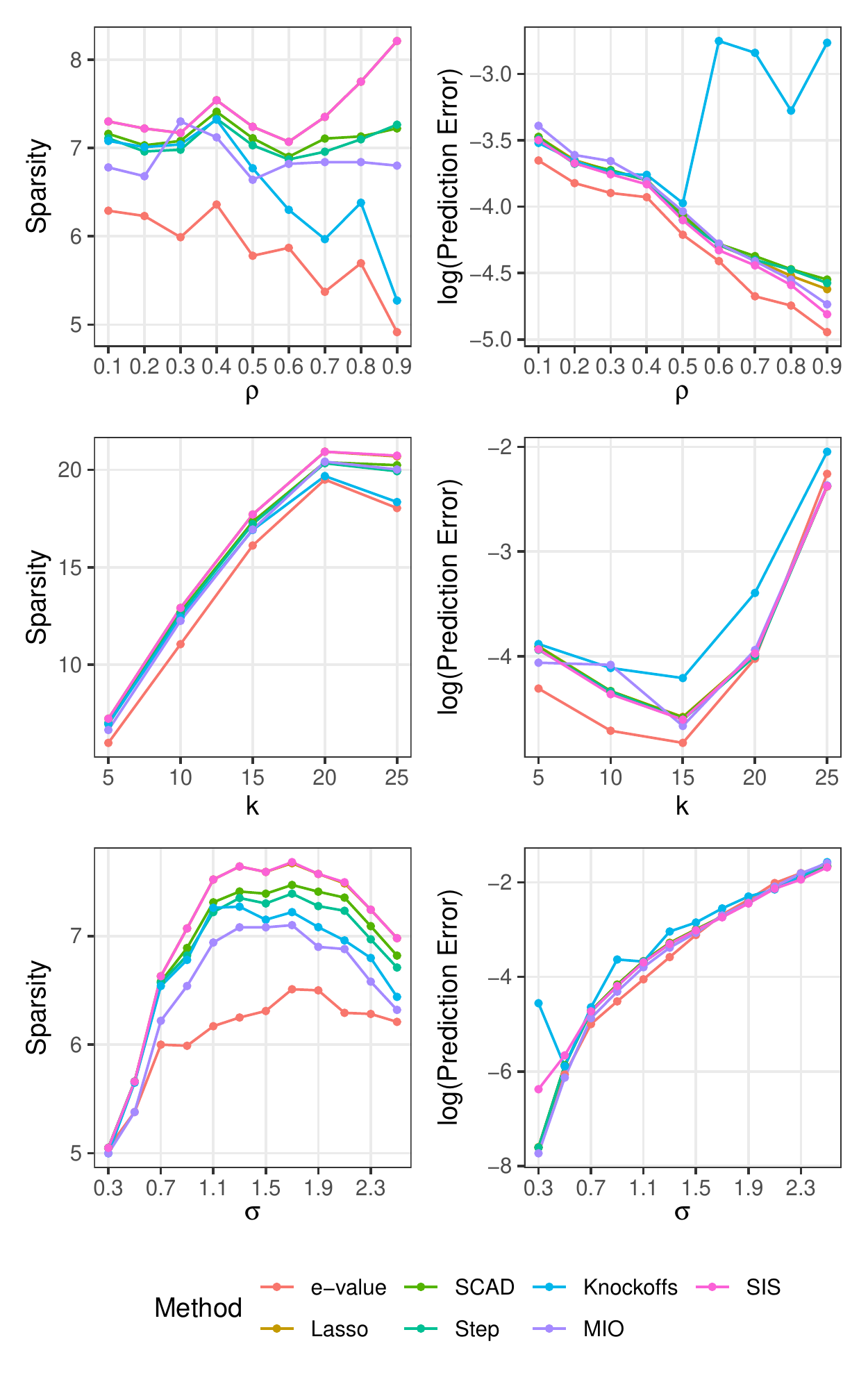}
\caption{Performance metrics for $n=100, p=500$.}
\label{fig:simplotsgamma2}
\end{figure}
We generate data from the same setup as Section~\ref{subsec:sim1}, but with $n = 100, p = 500$. We perform an initial screening of features using SIS, then apply the e-values and other methods on this SIS-selected predictor set. 
Figure~\ref{fig:simplotsgamma2} summarizes the results. In addition to competing methods, we report metrics corresponding to the original SIS-selected predictor set as a baseline. Sparsity-wise, e-values produce the most improvement over the SIS baseline among all methods, and tracks the true sparsity level closely. Both e-values and Knockoffs produce sparser estimates as $\rho$ grows higher (Setting 1). However, unlike the Knockoffs our method maintains good prediction performance even at high feature correlation. In general e-values maintain good prediction performance, although this difference is less obvious than the low-dimensional case. Note that for $k=25$ in setting 2, SIS screening produces overly sparse feature sets, affecting prediction errors for all methods.


\subsection{Mixed effect model}
\label{subsec:sim3}

%


\begin{table*}[t]
\centering
\scalebox{1}{
\begin{tabular}{llllllllll}
\hline
\multicolumn{2}{c}{Method} & 
\multicolumn{4}{c}{Setting 1: $n_i=5,m=30$} & \multicolumn{4}{c}{Setting 2: $n_i=10,m=60$}\\\cline{3-10}
& & FPR & FNR & Acc & MS & FPR & FNR & Acc & MS \\\hline
               & $\delta=0$                                   & 9.4  & 0.0  & 76 & 2.33 & 0.0  & 0.0  & 100& 2.00 \\
    ~          & $\delta=0.01$                                & 6.7  & 0.0  & 82 & 2.22 & 0.0  & 0.0  & 100& 2.00 \\
    e-value    & $\delta=0.05$                                & 1.0  & 0.0  & 97 & 2.03 & 0.0  & 0.0  & 100& 2.00 \\
    ~          & $\delta=0.1$									& 0.3  & 0.0  & 99 & 2.01 & 0.0  & 0.0  & 100& 2.00 \\
    ~          & $\delta=0.15$									& 0.0  & 0.0  & 100& 2.00 & 0.0  & 0.0  & 100& 2.00 \\\hline
									& BIC					& 21.5 & 9.9  & 49 & 2.26 & 1.5 & 1.9 & 86  & 2.10 \\
									& AIC                  & 17   & 11.0 & 46 & 2.43 & 1.5 & 3.3 & 77  & 2.20 \\
SCAD \cite{ref:PengLu_JMVA12109}	& GCV                  & 20.5 & 6    & 49 & 2.30 & 1.5 & 3   & 79  & 2.18 \\
									& $\sqrt{\log n/n}$    & 21   & 15.6 & 33 & 2.67 & 1.5 & 4.1 & 72  & 2.26 \\\hline
M-ALASSO \cite{ref:Bondelletal_Biometrics101069} &		& -    & -    & 73 & -    & -   & -   & 83  & -    \\
SCAD-P \cite{ref:FanLi_AoS122043}                &		& -    & -    & 90 & -    & -   & -   & 100 & -    \\
rPQL \cite{HuiEtal17}                            &		& -    & -    & 98 & -    & -   & -   & 99  & -    \\\hline
\end{tabular}
}
\caption{Performance comparison for mixed effect models. We compare e-values with a number of sparse penalized methods: (a) \citet{ref:PengLu_JMVA12109} that uses SCAD penalty and different methods of selecting regularization tuning parameters, (b) The adaptive lasso-based method of \citet{ref:Bondelletal_Biometrics101069}, (c) The SCAD-P method \citet{ref:FanLi_AoS122043}, and (d) regularized Penalized Quasi-Likelihood \citet[rPQL]{HuiEtal17}. For comparison with \citet{ref:PengLu_JMVA12109}, we present mean false positive (FPR) and false negative (FNR) rates, Accuracy (Acc), and Model Size (MS), i.e. the number of non-zero fixed effects estimated. To compare with other methods we only use Acc, since they did not report the rest of the metrics.
}
\label{table:simtablemixed}
\end{table*}

We use the simulation setup from \citet{ref:PengLu_JMVA12109}:
%
$ \bfY = \bfX \beta + \bfR \vecU + \bfepsilon$, where
%
the data $Z \equiv (Y,X,R)$ consists of $m$ independent groups of observations with multiple ($n_i$) dependent observations in each group, $\bfR$ being the within-group random effects design matrix. We consider 9 fixed effects and 4 random effects, with the true fixed effect coefficient $\bfbeta_0 = (1_2,0_7)$ and random effect coefficient $U$ drawn from $\cN_4(0, \Delta)$. The random effect covariance matrix $\Delta$ has elements $(\Delta)_{1 1} = 9$, $(\Delta)_{2 1} = 4.8$, $(\Delta)_{2 2} = 4$, $(\Delta)_{3 1} = 0.6$, $(\Delta)_{3 2} = (\Delta)_{3 3} = 1$, $(\Delta)_{4 j} = 0; j = 1, \ldots, 4$, and the error variance of $\epsilon$ is set at $\sigma^2 = 1$. The goal here is to select covariates of the fixed effect. We use two scenarios for our study: Setting 1, where the number of groups ($m$) considered is $30$, and the number of observations in the $i^{\Th}$ group, $i = 1, \ldots, m$, is $n_i = 5$, and Setting 2, where $m = 60, n_i = 10$. Finally, we generate 100 independent datasets for each setting. To implement e-values by minimizing GBIC$(\tau_n)$, we consider $\tau_n \in \{1, 1.2, \ldots, 4.8, 5 \}$. To tackle small sample issues in Setting 1 (Section~\ref{subsec:Algo}), we repeat the model selection procedure using the shifted e-values $\hat e^\delta_{rn}(\cdot)$ for $\delta \in \{0, 0.01, 0.05, 0.1, 0.15 \}$.

Without shifted thresholds, e-values perform commendably in both settings. For Setting 2, it reaches the best possible performance across all metrics. 
However, we observed that in a few replications of setting 1, a small number of null input features had e-values only slightly lower than the full model e-value, resulting in increased FPR and model size. We experimented with lowered detection thresholds to mitigate this. As seen in Table~\ref{table:simtablemixed}, increasing $\delta$ gradually eliminates the null features, and e-values reach perfect performance across all metrics for $\delta=0.15$.
\section{Real data examples}
\paragraph{Indian monsoon data}

\begin{figure*}[t!]
\centering
\subfigure[]{
\label{fig:tstat}
\includegraphics[width=.5\linewidth]{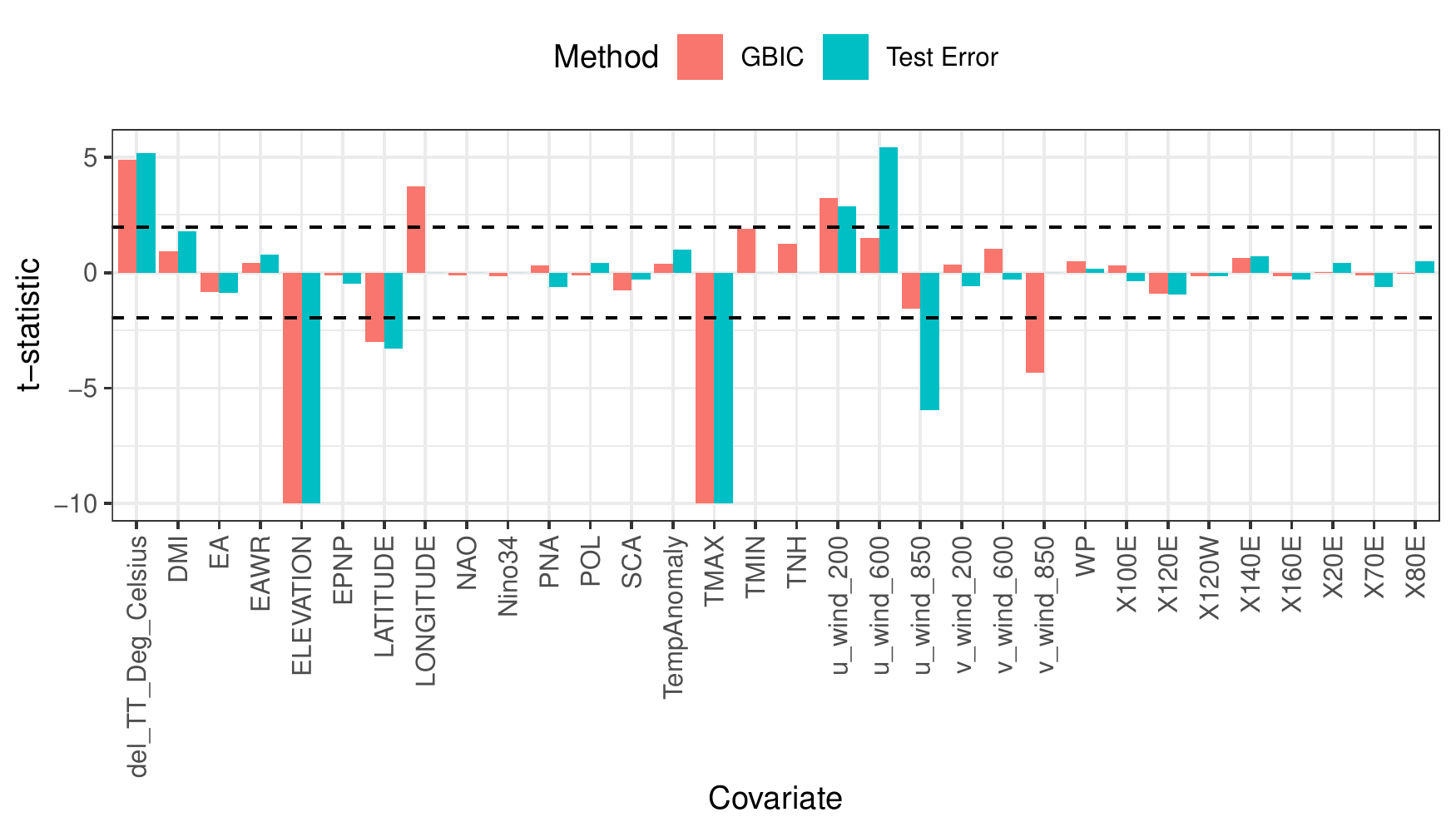}
}
\subfigure[]{
\label{fig:figa}
\includegraphics[width=.25\linewidth]{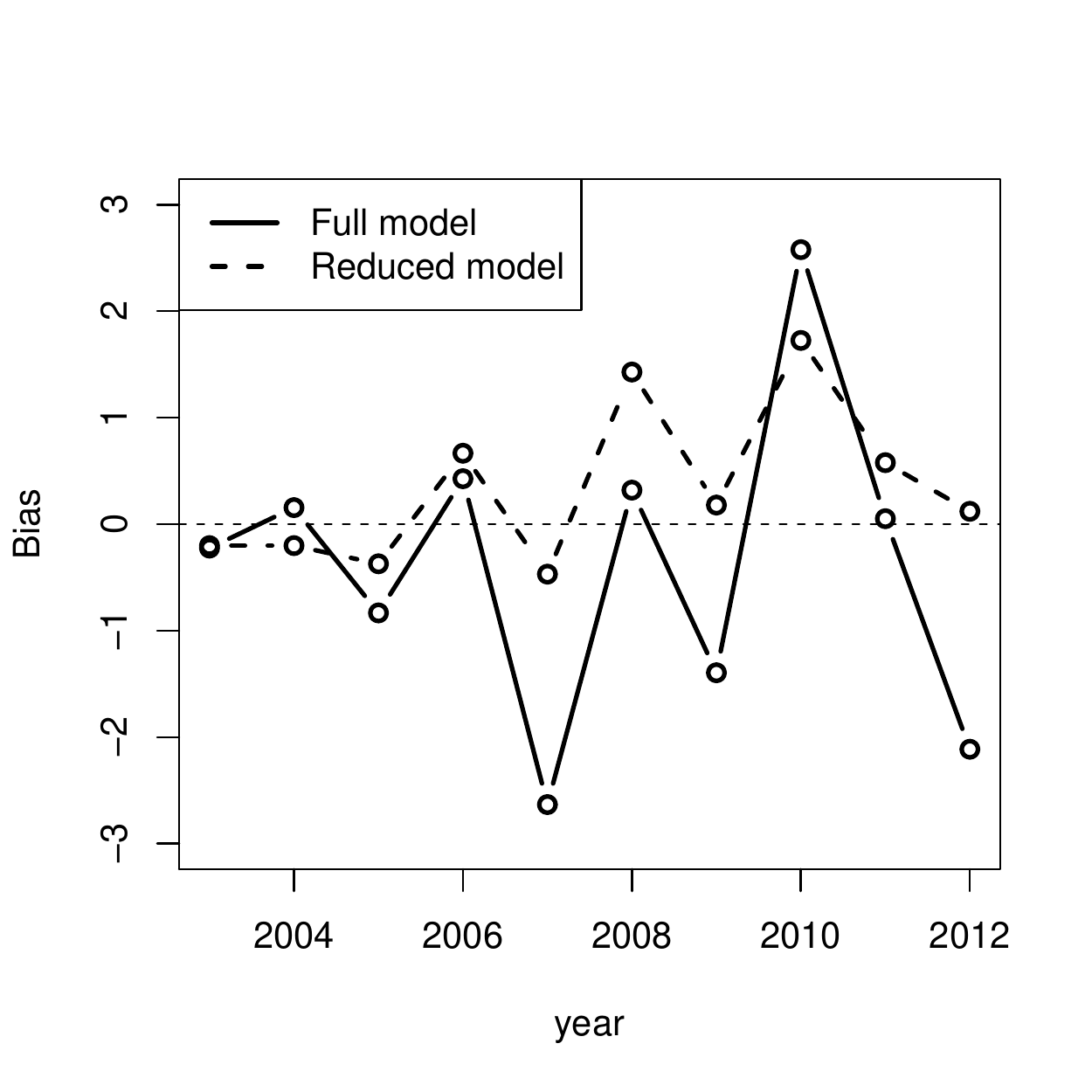}
}
\subfigure[]{
\label{fig:figd}
\includegraphics[width=.2\linewidth]{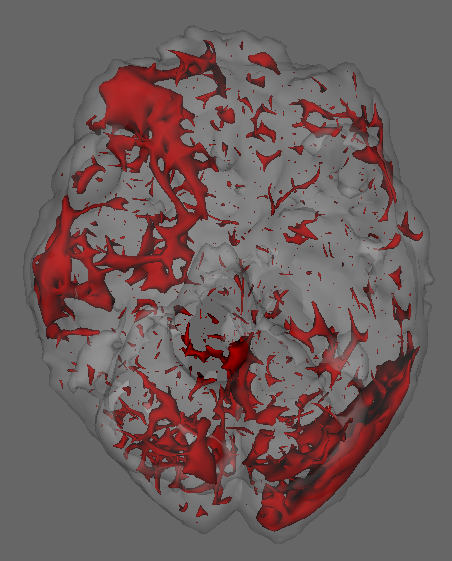}
}
\caption{(a) Plot of t-statitics for Indian Monsoon data for features selected by at least one of the methods. Dashed lines indicate standard Gaussian quantiles at tail probabilities 0.025 and 0.975, (b) Full vs. reduced model rolling forecast comparisons, and (c) fMRI data: smoothed surface obtained from p-values show high spatial dependence in right optic nerve, auditory nerves and auditory cortex (top left), left visual cortex (bottom right) and cerebellum (lower middle).}
\label{fig:realdata}
\end{figure*}

To identify the driving factors behind precipitation during the Indian monsoon season using e-values, we obtain data on 35 potential covariates (see Appendix~\ref{Section:IndianMonsoon}) from National Climatic Data Center (NCDC) 
and National Oceanic and Atmospheric Administration (NOAA) repositories for 1978--2012. We consider annual medians of covariates as fixed effects, log yearly rainfall at a weather station as output feature, and include year-specific random intercepts. To implement e-values, we use projection depth \citep{zuo03} and GBS resample sizes $R = R_{1} = 1000$. We train our model on data from the years 1978-2002, run e-values best subset selection for tuning parameters $\tau_n \in \{ 0.05, 0.1, \ldots, 1\}$. We consider two methods to select the best refitted model: (a) minimizing GBIC$(\tau_n)$, and (b) minimizing forecasting errors on samples from 2003--2012.


Figure~\ref{fig:realdata}a plots the t-statistics for features from the best refitted models obtained by the above two methods. Minimizing for GBIC and test error selects 32 and 26 covariates, respectively. The largest contributors are maximum temperature and elevation, which are related to precipitation based on the Clausius-Clapeyron relation \citep{Lietal17,SingletonToumi12}. All other selected covariates have documented effects on Indian monsoon \citep{KrishChapter,MoonWangHa12}. Reduced model forecasts obtained from a rolling validation scheme (i.e. i.e. use data from 1978--2002 for 2003, 1979-2003 for 2004 and so on) have less bias across testing years (Fig. \ref{fig:realdata}b).
\paragraph{Spatio-temporal dependence analysis in fMRI data}
\label{subsec:fmri}

This dataset is due to \citet{WakemanHenson15}, where each of the 19 subjects go through 9 runs of consecutive visual tasks. Blood oxygen level readings are recorded across time as 3D images made of $64 \times 64 \times 33$ (total 135,168) voxels. Here we use the data from a single run and task on subject 1, and aim to estimate dependence patterns of readings across 210 time points and areas of the brain.
We fit separate regressions at each voxel (Appendix~\ref{sec:fmri}), with second order autoregressive terms, neighboring voxel readings and one-hot encoded visual task categories in the design matrix. After applying the e-value feature selection, we compute the F-statistic at each voxel using selected coefficients only, and obtain their p-values. Fig.~\ref{fig:realdata}c highlights voxels with p-values $< 0.05$. Left and right visual cortex areas show high spatial dependence, with more dependence on the left side. Signals from the right visual field obtained by both eyes are processed by the left visual cortex. The lop-sided dependence pattern suggests that visual signals from the right side led to a higher degree of processing in our subject's brain. We also see activity in the cerebellum, the role of which in visual perception is well-known \citep{CalhounEtal10, Kirschen10}.

\section{Conclusion}
\label{Section:Conclusion}

In this work, we introduced a new paradigm of feature selection through {\it e-values}. The e-values can be particularly helpful in situations where model training is costly and potentially distributed across multiple servers (i.e. federated learning), so that a brute force parallelized approach of training and evaluating multiple models is not practical or even possible. 

There are three immediate extensions of the framework presented in this paper. Firstly, grouped e-values are of interest to leverage prior structural information on the predictor set. There are no conceptual difficulties in evaluating overlapping {\it and} non-overlapping groups of predictors using e-values in place of individual predictors. However technical conditions may be required to ensure a rigorous implementation. Secondly, our current formulation of e-values essentially relies upon the number of samples being more than the effective number of predictors for a unique covariance matrix of the parameter estimate to asymptotically exist. When $p>n$, using a sure screening method to filter out unnecessary variables works well empirically. However this needs theoretical validation. 
Thirdly, instead of using mean depth, other functionals of the (empirical) depth distribution---such as quantiles---can be used as e-values. Similar to stability selection \cite{stabsel}, it may be possible to use the intersection of predictor sets obtained by using a number of such functionals in Algorithm~\ref{alg:algoselectboot} as the final selected set of important predictors.

An effective implementation of e-values hinges on the choice of bootstrap method and tuning parameter $\tau_n$. To this end, we see opportunity for enriching the research on empirical process methods in complex overparametrized models, such as Deep Neural Nets (DNN), which the e-values framework can build up on. Given the current push for interpretability and trustworthiness of DNN-based decision making systems, there is potential for tools to be developed within the general framework of e-values that provide local and global explanations of large-scale deployed model outputs in an efficient manner.

\section*{Acknowledgements}
This work is part of the first author (SM)'s PhD thesis \citep{mythesis}. He acknowledges the support of the University of Minnesota Interdisciplinary Doctoral Fellowship during his PhD. The research of SC is partially supported by the US National Science Foundation grants 1737918, 1939916 and 1939956 and a grant from Cisco Systems.

\section*{Reproducibility}
Code and data for the experiments in this paper are available at \url{https://github.com/shubhobm/e-values}.

\bibliographystyle{icml2022}
\bibliography{main}

\newpage
\appendix
\onecolumn
\section*{Appendix}
\section{Consistency of Generalized Bootstrap}
\label{app:boot}
We first state a number of conditions on the energy functions $\psi_i(\cdot,\cdot)$, under which we state and prove two results to ensure consistency of the estimation and bootstrap approximation procedures. In the context of the main paper, the conditions and results here ensure that the full model parameter estimate $\hat \theta_{*}$ follows a Gaussian sampling distribution, and Generalized Bootstrap (GBS) can be used to approximate this sampling distribution.

\subsection{Technical conditions}
Note that several sets of alternative conditions can be developed \cite{ref:CBose_AoS05414,Lahiri92}, many of which are 
amenable to our results. However, for the sake of brevity and clarity, 
we only address the case where  the energy function
$\psi_{i}  ( \cdot, \cdot )$ is smooth in the first argument. This case covers a vast number of models  routinely considered in statistics and machine learning. 

We often drop the second argument from energy function, thus for example 
$ \psi_{i}  \bigl( \theta \bigr) \equiv  \psi_{i}  \bigl( \theta, Z_i \bigr)$, 
and use the notation $\hat{\psi}_{ k i}$ for $\psi_{k i} (\hat{\vectheta}_{*})$, 
for $k = 0, 1, 2$. Also, for any function $h (\vectheta)$ evaluated at the true parameter value $\vectheta_{*}$, we use the notation $h \equiv h (\vectheta_{*})$. When $A$ and $B$ are 
square matrices of identical dimensions, the notation $B < A$ implies that 
the matrix $A - B$ is positive definite. 

In a neighborhood of $\vectheta_{*}$, we assume the functions $\psi_{i}$ are 
thrice continuously differentiable in the first argument, with the successive  
derivatives denoted by $\psi_{k i}$, $k = 0, 1, 2$. That is, there exists a $\delta > 0$ such that for any $\theta = \vectheta_{*} + t$ satisfying 
$\| t \| < \delta$ we have
\ban 
{\frac {d}{ d \theta}} \psi_{i} (\theta)  & := \psi_{0 i} (\theta) \in \BR^{p}, 
\ean 
and for the $a$-th element  of $\psi_{0 i} (\theta)$, denoted by 
$\psi_{0 i (a)} (\theta)$,  we have  
\ban
\psi_{0 i (a)} (\theta) & = \psi_{0 i (a)} (\vectheta_{*}) 
	+ \psi_{1 i (a)} (\vectheta_{*}) t  
	+ 2^{-1}  t^{T} \psi_{2 i (a)} (\vectheta_{*} + c t) t,
\ean
for $a = 1, \ldots p$, and some $ c \in (0, 1)$ possibly depending on $a$. 
We assume that for each  $n$, there is a sequence of $\sigma$-fields 
$\cF_{1} \subset \cF_{2} \ldots \subset \cF_{n}$ such that 
$\{ \sum_{i = 1}^{j}  \psi_{0 i } (\vectheta_{*}), \cF_{j} \}$ is a martingale. 

The spectral decomposition of $\Gamma_{0 n} :=  \sum_{i = 1}^{n} \BE \psi_{0 i} \psi_{0 i}^{T}$  is given by 
$\Gamma_{0 n}  = \matP_{0 n} \vecLambda_{0 n} \matP^{T}_{0 n}$, 
where $\matP_{0 n} \in \BR^{ p} \times \BR^{p}$ is an orthogonal matrix
whose columns contain the eigenvectors, 
and $\vecLambda_{0 n}$ is a diagonal matrix containing the eigenvalues of 
$\Gamma_{0 n}$. We assume that $\Gamma_{0 n}$ is positive 
definite, that is, all the diagonal entries of $\vecLambda_{0 n}$ are positive numbers. We assume that there is a constant $\delta_{0 } > 0$ such that 
$\lambda_{min} (\Gamma_{0 n}) > \delta_{ 0 }$ for sufficiently large $n$.

Let $\Gamma_{1 i}(\vectheta_{*})$ be the $p \times p$ matrix whose $a$-th row is $\BE \psi_{1 i (a)}$; we assume this expectation exists. Define $\Gamma_{1 n} = \sum_{ i =1}^{n} \Gamma_{1 i} (\vectheta_{*})$. We assume that $\Gamma_{1 n}$ is nonsingular for each  $n$. The singular value decomposition of $\Gamma_{1 n}$ is given by 
$\Gamma_{1 n}  =  \matP_{1 n} \vecLambda_{1 n} \matQ^{T}_{1 n}$, where $\matP_{1 n},  \matQ_{1 n}  \in \BR^{ p} \times \BR^{p}$ 
are orthogonal matrices, and $\vecLambda_{1 n}$ is a diagonal  matrix. We assume that the diagonal entries of $\vecLambda_{1 n}$ are all positive, which implies that \textit{ in the population, at the true value of the parameter} the energy functional $\Psi_n(\theta_*)$ actually achieves a minimal value. We define $\vecLambda_{k n}^{c}$ for various real numbers $c$ as diagonal matrices where the $j$-th diagonal entry of $\vecLambda_{k n}$ is raised to the power $c$, for $k = 0, 1$. Correspondingly, we 
define $\Gamma_{1 n}^{c} =   \matP_{1 n} \vecLambda_{1 n}^{c} \matQ^{T}_{1 n}$. We assume that there is a constant $\delta_{1 } > 0$ such that $\lambda_{max} (\Gamma_{1 n}^{T} \Gamma_{1 n}) < \delta_{1 }$ for all sufficiently large $n$.
Define the matrix 
$A_{n} = 
 \Gamma_{0 n}^{-1/2} \Gamma_{1 n}$. 
We assume the following conditions: 

\renewcommand{\theequation}{C.\arabic{equation}}
\begin{enumerate}[nolistsep]
\item[(C1)] 
The minimum eigenvalue of $A^{T}_{n} A_{n}$ tends to infinity. That is, there is a 
sequence $a_{n} \uparrow \infty$ as $n \raro \infty$ such that 
\baq 
\lambda_{min} \bigl( \Gamma_{1 n} \Gamma_{0 n}^{-1}  \Gamma_{1 n}^{T} \bigr)
\asymp a_{n}^{2}. 
\label{eq:MinEigenvalueRate}
\eaq

\item[(C2)] 
There exists a sequence of positive reals
$\{ \gamma_{n}\}$ that is bounded away from zero, such that
\baq
\lambda_{max} \bigl( \Gamma_{1 n}^{-1} \Gamma_{0 n}^{2} \Gamma_{1 n}^{-T} \bigr)
= o (\gamma_{n}^{-2}) \ \ \text{ as $n \raro \infty$.} 
\label{eq:Trace} 
\eaq 

\item[(C3)] 
\baq 
\BE  \left\| A_{n}^{-1} 
 \bigl( \sum_{i = 1}^n \psi_{1 i} - \Gamma_{1 n } \bigr) 
 A_{n}^{-1} \right\|_{F}^{2} = o (p \gamma_{n}^{-2}).
 \label{eq:FrobSq}
 \eaq
 where $\|A\|_{F}$ denotes the Frobenius norm of matrix $A$.

\item[(C4)] 
For the symmetric matrix $\psi_{2 i (a)} (\theta)$ and for some $\delta_2 > 0$, there exists a symmetric matrix $M_{2  i (a)}$ such that
\ban 
\sup_{ \| \theta - \vectheta_{*} \| < \delta_2} 
 \psi_{2 i (a)} (\theta) < M_{2 i (a)}, 
 \ean
satisfying 
\baq
\sum_{a = 1}^p \sum_{i = 1}^n 
 \BE \lambda_{max}^{2} \bigl( M_{2 i (a)} \bigr)
  & = o \bigl( a_{n}^{6}  n^{-1} p  \gamma_{n}^{-2} \bigr).
  \label{eq:2ndMomentBound}
\eaq

\item[(C5)] 
For any vector $c \in \BR^{ p}$ with $\|c\| = 1$, we define the random variable $Z_{n i} = - c^{T} \Gamma_{0 n}^{-1/2} \psi_{i}$ for $i =1, \ldots n$. We assume that 
\baq 
\sum_{i =1}^{n} Z_{n i}^{2} \praro 1, 
\text{ and } 
\BE \bigl[ \max_{i} \| Z_{n i} \| \bigr] \raro 0. 
\label{eq:CLTConditions}
\eaq

\item[(C6)] 
 Assume that
\baq 
\lambda_{max} \bigl( \Gamma_{1 n} \Gamma_{0 n}^{-1}  \Gamma_{1 n}^{T} \bigr)
\asymp a_{n}^{2}. 
\label{eq:MaxEigenvalueRate}
\eaq
\end{enumerate}
\renewcommand{\theequation}{\thesection.\arabic{equation}}

The technical conditions (C1)-(C5) are extremely broad, and allow for different rates of convergence of different parameter estimators. The additional condition (C6) is a natural condition that, coupled with (C1), ensures identical rate of convergence $a_{n}$ for all the parameter estimators in a model.

Standard regularity conditions on likelihood and estimating functions that have been routinely assumed in the literature are special cases of the framework above. In such cases,  (C1)-(C6)  hold with $a_{n} \equiv {n}^{1/2}$, 
resulting in the standard ``root-$n$'' asymptotics.

\subsection{Results}
We first present the consistency and asymptotic normality of the estimation process in Theorem~\ref{Theorem:CLT} below.

\begin{theorem}
\label{Theorem:CLT}
Assume conditions (C1)-(C5). Then $\hat{\vectheta}_{*}$ is a consistent estimator of  ${\vectheta}_{*}$, and
$A_{n} (\hat{\vectheta}_{*} - {\vectheta}_{*})$ converges weakly to the $p$-dimensional standard Gaussian distribution. Under the additional condition (C6), we have that $a_{n} (\hat{\vectheta}_{*} - {\vectheta}_{*})$ converges weakly to a Gaussian distribution in $p$-dimension.
\end{theorem}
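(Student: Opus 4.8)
The plan is to follow the standard $M$-estimation / $Z$-estimation route, but carried out carefully at the matrix level because the conditions (C1)--(C6) allow genuinely different rates of convergence for different coordinates. First I would establish consistency. Since $\hat\vectheta_*$ minimizes $\Psi_n(\theta) = \sum_i \psi_i(\theta,Z_i)$ and $\theta_*$ minimizes $\Psi_n(\theta) = \BE\sum_i\psi_i(\theta,Z_i)$ uniquely, the first-order condition gives $\sum_i \psi_{0i}(\hat\vectheta_*) = 0$. I would Taylor-expand each component $\psi_{0i(a)}$ around $\vectheta_*$ using the third-order-differentiability hypothesis already assumed, obtaining
\[
0 = \sum_{i=1}^n \psi_{0i(a)} + \Big(\sum_{i=1}^n \psi_{1i(a)}\Big)(\hat\vectheta_*-\vectheta_*) + \tfrac12 (\hat\vectheta_*-\vectheta_*)^T\Big(\sum_i \psi_{2i(a)}(\vectheta_*+c_a t)\Big)(\hat\vectheta_*-\vectheta_*).
\]
Stacking over $a$ and left-multiplying by $\Gamma_{1n}^{-1}$ gives $\hat\vectheta_*-\vectheta_* = -\Gamma_{1n}^{-1}\sum_i\psi_{0i} - (\text{remainder})$. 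The martingale assumption on $\{\sum_{i\le j}\psi_{0i},\cF_j\}$ together with (C1) (minimum eigenvalue of $\Gamma_{1n}\Gamma_{0n}^{-1}\Gamma_{1n}^T \asymp a_n^2\to\infty$) shows the linear term is $o_P(1)$ in the appropriate norm, and (C2)--(C4) control the quadratic remainder; consistency follows.

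Next, for asymptotic normality I would renormalize by $A_n = \Gamma_{0n}^{-1/2}\Gamma_{1n}$. From the stacked expansion,
\[
A_n(\hat\vectheta_*-\vectheta_*) = -\Gamma_{0n}^{-1/2}\sum_{i=1}^n\psi_{0i} - A_n^{-1}\Big(\sum_i\psi_{1i}-\Gamma_{1n}\Big)(\hat\vectheta_*-\vectheta_*) - (\text{quadratic term}).
\]
For the leading term $-\Gamma_{0n}^{-1/2}\sum_i\psi_{0i}$, condition (C5) is exactly a martingale-CLT (Lindeberg-type) hypothesis: for any unit vector $c$, the triangular array $Z_{ni} = -c^T\Gamma_{0n}^{-1/2}\psi_i$ satisfies $\sum_i Z_{ni}^2\praro 1$ and the max term vanishes, so by a martingale central limit theorem $c^T\Gamma_{0n}^{-1/2}\sum_i\psi_{0i}\leadsto N(0,1)$; the Cramér--Wold device then gives weak convergence to the $p$-dimensional standard Gaussian. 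The second term on the right is handled by (C3) (which bounds $\BE\|A_n^{-1}(\sum_i\psi_{1i}-\Gamma_{1n})A_n^{-1}\|_F^2$) combined with the already-established rate of $\hat\vectheta_*-\vectheta_*$, and the quadratic term by (C4) via the dominating symmetric matrices $M_{2i(a)}$ and Markov's inequality. Collecting these, $A_n(\hat\vectheta_*-\vectheta_*)\leadsto N_p(0,I_p)$.

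Finally, for the last sentence I would add condition (C6). Conditions (C1) and (C6) together say that both the minimum and maximum eigenvalues of $\Gamma_{1n}\Gamma_{0n}^{-1}\Gamma_{1n}^T$ are $\asymp a_n^2$, i.e. this matrix is $a_n^2$ times a matrix with condition number bounded above and below. Hence $A_n^T A_n / a_n^2 = \Gamma_{1n}^T\Gamma_{0n}^{-1}\Gamma_{1n}/a_n^2$ has eigenvalues bounded away from $0$ and $\infty$, so along subsequences $a_n^{-1}A_n$ converges (after passing to a further subsequence, using compactness) to a matrix with bounded inverse; writing $a_n(\hat\vectheta_*-\vectheta_*) = (a_n^{-1}A_n)^{-1}\cdot A_n(\hat\vectheta_*-\vectheta_*)$ and applying Slutsky gives weak convergence of $a_n(\hat\vectheta_*-\vectheta_*)$ to a (possibly subsequence-dependent) Gaussian limit; one then argues the limit law is unique, or simply states the conclusion as convergence to \emph{a} Gaussian distribution as in the theorem.

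I expect the main obstacle to be the careful bookkeeping in the two-term-plus-remainder expansion: showing that the $\sum_i\psi_{1i}-\Gamma_{1n}$ deviation term and the second-order Taylor remainder are genuinely negligible \emph{after} the $A_n$-rescaling, since these must be controlled in an operator/Frobenius-norm sense uniformly over the $p$ coordinates while the coordinate-wise rates differ. This is precisely what the somewhat intricate moment bounds (C3) and (C4) — with their powers of $a_n$, $n^{-1}$, $p$, and $\gamma_n^{-2}$ — are engineered to deliver, so the work is in matching each stochastic term to the right condition rather than in any single hard estimate.
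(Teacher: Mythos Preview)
Your overall strategy and the way you match each stochastic term to its controlling condition (C1)--(C5) are essentially the same as the paper's, and your treatment of the leading term via Cram\'er--Wold plus the martingale CLT embedded in (C5), and of the last sentence via (C1)+(C6), is correct.

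There is, however, one genuine logical gap in the consistency step. You begin by assuming the first-order condition $\sum_i\psi_{0i}(\hat\vectheta_*)=0$ holds and then Taylor-expand around $\vectheta_*$, planning to bound the quadratic remainder via (C4). But (C4) only gives the domination $\psi_{2i(a)}(\theta)<M_{2i(a)}$ for $\|\theta-\vectheta_*\|<\delta_2$; to invoke it you already need $\hat\vectheta_*$ in that neighborhood, which is precisely the consistency you are trying to prove. The argument as written is circular, and it also takes for granted that a root of the score exists in the first place.

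The paper resolves both issues in one stroke by the standard localization device: reparametrize as $\theta=\vectheta_*+A_n^{-1}t$ and work on the compact ball $\{\|t\|\le C\}$. Since $\|A_n^{-1}t\|\le C\,\lambda_{\min}^{-1/2}(A_n^TA_n)\to 0$ by (C1), every such $\theta$ is eventually in the $\delta_2$-neighborhood, so (C3) and (C4) apply \emph{uniformly in $t$} on this ball. One then shows (via a moment bound on $p^{-1/2}A_n^{-1}\sum_i\psi_{0i}$ and the uniform smallness of the centered process $S_n(t)$) that with high probability the score, projected suitably, is nonnegative on the sphere $\|t\|=C$; this forces a root $T_n$ inside the ball. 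Setting $\hat\vectheta_*=\vectheta_*+A_n^{-1}T_n$ simultaneously gives existence, consistency, and the correct $A_n$-rate. The asymptotic normality then drops out of the linear representation $T_n=-\Gamma_{0n}^{-1/2}\sum_i\psi_{0i}+o_P(1)$, exactly as in your second paragraph. So the fix is not a different set of estimates but a different order of operations: localize first in the $A_n^{-1}t$-scale, then control the expansion uniformly there, rather than assuming a root and bootstrapping its closeness to $\vectheta_*$.
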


\begin{proof}[Proof of Theorem~\ref{Theorem:CLT}]

We consider a generic point $\vectheta = \vectheta_{ *} + A_{ n}^{-1} t$. 
From the Taylor series expansion, we have
\ban
\psi_{0i (a)} (\vectheta) & = \psi_{0i (a)}
	+ \psi_{1 i (a)} A_{ n}^{-1} t 
	+ 2^{-1}  t^{T} A_{n}^{-T} \psi_{2 i (a)} (\tilde{\vectheta}_{*} ) 
A_{ n}^{-1} t,
\ean
for $ a = 1, \ldots p$, and $\tilde{\vectheta}_{*} = {\vectheta}_{*}  + c A_{n}^{-1} t$ for some $c \in (0, 1)$.

Recall our convention that for any function $h (\vectheta)$ 
evaluated at the true parameter value $\vectheta_{*}$, we use the notation 
$h \equiv h (\vectheta_{*})$. Also define the $p$-dimensional vector 
$R_n (\tilde{\vectheta}_{*},  t)$ whose $a$-th element is given by 
\ban 
R_{n(a)} (\tilde{\vectheta}_{*},  t)
=  t^{T} A_{ n}^{-T} 
\sum_{i =1}^{n} \psi_{2 i (a)} (\tilde{\vectheta}_{*} ) A_{ n}^{-1} t.
\ean

Thus we have 
\ban 
p^{-1/2} A_{n}^{-1} \sum_{i = 1}^n \psi_{0 i} 
(\vectheta_{ *} + A_{n}^{-1} t)
& = p^{-1/2} A_{n}^{-1} \sum_{i = 1}^{ n} \psi_{0 i}  
	+  p^{-1/2} A_{n}^{-1} \sum_{i = 1}^{ n} \psi_{1 i} A_{n}^{-1} t 
+ 2^{-1}  p^{-1/2} A_{n}^{-1} R_n (\tilde{\vectheta}_{* },  t) \\
& = p^{-1/2} A_{n}^{-1} \sum_{i = 1}^{ n} \psi_{0 i}  
	+ p^{-1/2} A_{n}^{-1} \Gamma_{1 n } A_{n}^{-1} t \\
&\quad + p^{-1/2} A_{n}^{-1} 
\bigl( \sum_{i = 1}^{ n} \psi_{1 i} - \Gamma_{1 n } \bigr) 
A_{n}^{-1} t \\
&\quad + 2^{-1} p^{-1/2}  A_{n}^{-1} R_n (\tilde{\vectheta}_{*},  t).
\ean

Fix $\epsilon > 0$. We first show that there exists a $C_{0} > 0$ such that
\baq 
\BP \Bigl[ 
\bigl\| p^{-1/2}  A_{n}^{-1}  \sum_{i = 1}^{ n} \psi_{0 i} \bigr\| 
> C_{0} \Bigr] < \epsilon/2. 
\label{eq:Bound1}
\eaq
For this, we compute 
\ban
p^{-1} \BE \bigl\| A_{n}^{-1}  \sum_{i = 1}^{ n} \psi_{0 i} \bigr\|^{2}
& = p^{-1} \BE  \sum_{i, j = 1}^{ n} \psi_{0 i}^{T} 
A_{n}^{-T} A_{n}^{-1}   \psi_{0 j} \\
& = p^{-1} \trace \left( A_{n}^{-T} A_{n}^{-1} 
\BE  \sum_{i = 1}^{ n} \psi_{0 i} \psi_{0 i}^{T} \right)\\
& = p^{-1} \trace \left( A_{n}^{-T} A_{n}^{-1}  \Gamma_{0 n} \right)\\
& = O (1)
\ean 
from assumption \eqref{eq:Trace}. 
 
Define 
\ban 
S_n (t) = 
p^{-1/2} A_{n}^{-1} \bigl( \sum_{i = 1}^{ n} \psi_{0 i} 
(\vectheta_{ *} + A_{n}^{-1} t) 
- \sum_{i = 1}^{ n} \psi_{0 i}  \bigr) 
- p^{-1/2}  \Gamma_{1 n }^{-1} \Gamma_{0 n } t. 
\ean

We next show that for any $C > 0$, for all sufficiently large $n$, we have 
\baq 
\BE \Bigl[ \sup_{ \| t\| \leq C} \bigl\| S_{n} (t) \bigr\| \Bigr]^{2} = o (1). 
\label{eq:Bound2}
\eaq
This follows from \eqref{eq:FrobSq} and \eqref{eq:2ndMomentBound}. Note that 
 \ban 
 S_{n} (t) = 
p^{-1/2}  A_{n}^{-1} 
 \bigl( \sum_{i = 1}^{ n} \psi_{1 i} - \Gamma_{1 n } \bigr) 
 A_{n}^{-1} t
+ 2^{-1} p^{-1/2}  A_{n}^{-1} R_{ n} (\tilde{\vectheta}_{n },  t).
\ean
Thus, 
\ban 
\sup_{ \| t\| \leq C} \bigl\| S_{n} (t) \bigr\| 
& \leq 
p^{-1/2} \sup_{ \| t\| \leq C} \bigl\| 
 A_{n}^{-1} 
 \bigl( \sum_{i = 1}^{ n} \psi_{1 i} - \Gamma_{1 n } \bigr) 
 A_{n}^{-1} t \bigr\| 
+ 2^{-1} p^{-1/2} \sup_{ \| t\| \leq C} \bigl\| 
 A_{n}^{-1} R_{ n} (\tilde{\vectheta}_{*},  t) \bigr\|. 
\ean
We consider each of these terms separately. 

For any matrix $M \in \BR^{p} \times \BR^{p}$, we have
\ban 
\sup_{ \| t\| \leq C} \bigl\| M t \bigr\| 
= \sup_{ \| t\| \leq C} 
\Bigl[ \sum_{i =1}^{p} \bigl( \sum_{j =1}^{p} M_{ i j} t_{j} \bigr)^{2} \Bigr]^{1/2}
\leq \sup_{ \| t\| \leq C} 
\Bigl[ \sum_{i =1}^{p} \sum_{j =1}^{p} M_{ i j}^{2} 
\sum_{j =1}^{p} t_{j}^{2} \Bigr]^{1/2} 
= \bigl\| M \bigr\|_{F} \sup_{ \| t\| \leq C} \|t\| 
= C \bigl\| M \bigr\|_{F}.
\ean

Using $M =  A_{n}^{-1} \bigl( \sum_{i = 1}^{ n} \psi_{1 i} - \Gamma_{1 n } \bigr) A_{n}^{-1}$ and \eqref{eq:FrobSq}, we get one part of the result. 

For the other term, we similarly have 
\ban 
\Bigl[ \sup_{ \| t\| \leq C} \bigl\| 
p^{-1/2}  A_{n}^{-1} R_{ n} (\tilde{\vectheta}_{*},  t)
\bigr\| \Bigr]^{2}
& = p^{-1} \sup_{ \| t\| \leq C} 
\bigl\|  A_{n}^{-1} R_{n} (\tilde{\vectheta}_{*},  t)
\bigr\|^{2} \\
& \leq p^{-1} \lambda_{max}\bigl( A_{n}^{-T} A_{n}^{-1}  \bigr)
\sup_{ \| t\| \leq C} \bigl\| R_{n} (\tilde{\vectheta}_{*},  t)\bigr\|^{2}\\
& \leq p^{-1} \lambda_{max}\bigl( A_{n}^{-1} A_{n}^{-T}  \bigr)
\sup_{ \| t\| \leq C} \bigl\| R_{n} (\tilde{\vectheta}_{*},  t)\bigr\|^{2}\\
& \leq p^{-1} a_{n}^{-2} 
\sup_{ \| t\| \leq C} \bigl\| R_{n} (\tilde{\vectheta}_{*},  t)\bigr\|^{2}.
\ean

Note that 
\ban 
\bigl( \sup_{ \| t\| \leq C} \bigl\| R_n (\tilde\theta_*,  t)\bigr\| \bigr)^{2}
= 
\sup_{ \| t\| \leq C} \bigl\| R_n (\tilde\theta_*,  t)\bigr\|^{2}.
\ean
Now 
\ban 
\bigl\| R_n (\tilde\theta_*,  t)\bigr\|^{2}
& = \sum_{a = 1}^{ p} 
\bigl( R_{ * n (a)} (\tilde\theta_*,  t) \bigr)^{2} \\
& = \sum_{a = 1}^{ p} 
\bigl( 
t^{T} A_{n}^{-T} 
\sum_{i =1}^{n} \psi_{2 i (a)} (\tilde\theta_* ) A_{n}^{-1} t
\bigr)^{2} \\
& = \sum_{a = 1}^{ p} 
\sum_{i, j = 1}^{ n} 
t^{T} A_{n}^{-T} \psi_{2 i (a)} (\tilde\theta_* ) A_{n}^{-1} t
t^{T} A_{n}^{-T} \psi_{2 j(a)} (\tilde\theta_* ) A_{n}^{-1} t.
\ean
Based on this, we have 
\ban 
\sup_{ \| t\| \leq C} \bigl\| R_n (\tilde\theta_*,  t)\bigr\|^{2}
& = \sup_{ \| t\| \leq C} 
\sum_{a = 1}^{ p} \sum_{i, j = 1}^{ n} 
t^{T} A_{n}^{-T} \psi_{2 i (a)} (\tilde\theta_* ) A_{n}^{-1} t
t^{T} A_{n}^{-T} \psi_{2 * n j (a)} (\tilde\theta_* ) A_{n}^{-1} t
\\
& \leq \sup_{ \| t\| \leq C} 
\sum_{a = 1}^{ p} 
\sum_{i, j = 1}^{ n} 
t^{T} A_{n}^{-T} M_{2  i (a)}  A_{n}^{-1} t
t^{T} A_{n}^{-T} M_{2  j (a)}  A_{n}^{-1} t
\\
& \leq \sup_{ \| t\| \leq C} \bigl\| A_{n}^{-1} t \bigr\|^{4}
\sum_{a = 1}^{ p} 
\Bigl(
\sum_{i = 1}^{ n} 
 \lambda_{max} \bigl( M_{2 i (a)} \bigr) \Bigr)^{2}
 \\
& \leq  C^{4} n \lambda_{max}^{2} \bigl( A_{n}^{-T} A_{n}^{-1} \bigr)
\sum_{a = 1}^{ p} 
\sum_{i = 1}^{ n} 
 \lambda_{max}^{2} \bigl( M_{2 i (a)} \bigr).
\ean
 
Putting all these together, we have 
\ban 
\BE \Bigl[ \sup_{ \| t\| \leq C} \bigl\|
 p^{-1/2}  A_{n}^{-1} R_n (\tilde\theta_*,  t) \bigr\|
 \Bigr]^{2}
& = p^{-1}  \BE \Bigl[ \sup_{ \| t\| \leq C} 
  A_{n}^{-1} R_n (\tilde\theta_*,  t)
 \Bigr]^{2} \\
& \leq p^{-1}  a_{n}^{-2}   \BE \Bigl[ \sup_{ \| t\| \leq C} 
  \bigl\| R_n (\tilde\theta_*,  t)\bigr\|
 \Bigr]^{2} \\
 & = O \bigl( p^{-1} a_{n}^{-2} \bigr)
 \BE \Bigl[ \sup_{ \| t\| \leq C} 
  \bigl\| R_n (\tilde\theta_*,  t)\bigr\|
 \Bigr]^{2} \\
  & = O \bigl( p^{-1} n a_{n}^{-6} \bigr)
\sum_{a = 1}^{ p} 
\sum_{i = 1}^{ n} 
 \BE \lambda_{max}^{2} \bigl( M_{2 n i (a)} \bigr)\\
& = o(1),
\ean
using \eqref{eq:2ndMomentBound}.

Since we have defined 
\ban 
S_{n} (t) = 
p^{-1/2} A_{n}^{-1} \bigl( \sum_{i = 1}^{ n} \psi_{0 i} 
(\vectheta_{ *} + A_{n}^{-1} t) 
- \sum_{i = 1}^{ n} \psi_{0 i}  \bigr) 
- p^{-1/2}  \Gamma_{1 n }^{-1} \Gamma_{0 n } t, 
\ean
we have
\ban 
p^{-1/2} A_{n}^{-1}  \sum_{i = 1}^{ n} \psi_{0 i} 
(\vectheta_{ *} +  p^{1/2} A_{n}^{-1} t) 
&= S_{n} (t) + 
p^{-1/2} A_{n}^{-1}   \sum_{i = 1}^{ n} \psi_{0 i} 
+ A_{n}^{-1} \Gamma_{1 n } A_{n}^{-1} t.
\ean

Hence
\ban 
&\inf_{ \| t\| = C} 
\Bigl\{ p^{-1/2} 
t^{T} \Gamma_{1 n } A_{n}^{-1}  \sum_{i = 1}^{ n} \psi_{0 i} 
(\vectheta_{ *} +  p^{1/2} A_{n}^{-1} t) 
\Bigr\} \\
& = \inf_{ \| t\| = C} 
\Bigl\{ 
t^{T}  \Gamma_{1 n } S_{n} (t) + 
p^{-1/2} t^{T}  \Gamma_{1 n }  A_{n}^{-1}  \sum_{i = 1}^{ n} \psi_{0 i}  
+ t^{T}  \Gamma_{1 n } A_{n}^{-1} \Gamma_{1 n } A_{n}^{-1} t
\Bigr\} \\
& \geq 
 \inf_{ \| t\| = C} t^{T}  \Gamma_{1 n } S_{n} (t) 
 + p^{-1/2}  \inf_{ \| t\| = C} t^{T}  \Gamma_{1 n }  A_{n}^{-1}  
 \sum_{i = 1}^{ n} \psi_{0 i}  
+ \inf_{ \| t\| = C} t^{T}  \Gamma_{1 n } A_{n}^{-1} \Gamma_{1 n } A_{n}^{-1} t
\\
 & \geq  - C  \delta_{1}^{1/2}  \sup_{ | t| = C} | S_{n} (t) |
- C \delta_{1}^{1/2}  p^{-1/2}  
 \| A_{n}^{-1}  \sum_{i = 1}^{ n} \psi_{0 i} \| + C^{2} \delta_{0}.
\ean
The last step above utilizes facts like $a^{T} b \geq - \|a\| \|b\|$. Consequently, defining $C_{1} = C \delta_{ 0 }/\delta_{1 }^{1/2}$, we have 
\ban 
& \BP \Bigl[  
\inf_{ \| t\| = C} 
\Bigl\{ 
t^{T} \Gamma_{1 n }  A_{n}^{-1}  \sum_{i = 1}^{ n} \psi_{0i} (\vectheta_{ *} +  p^{1/2} A_{n}^{-1} t) 
\Bigr\} < 0 
\Bigr] \\
& \leq \BP \Bigl[  
\sup_{ \| t\| = C} | S_{n} (t) | + | A_{n}^{-1}  \sum_{i = 1}^{ n} \psi_{0i} | 
> C_{1} \Bigr] \\
& \leq \BP \Bigl[  
\sup_{ \| t\| = C} \bigl\| S_{n} (t) \bigr\|  > C_{1}/2 \Bigr] 
+ \BP \Bigl[ \bigl\| A_{n}^{-1}  \sum_{i = 1}^{ n} \psi_{0 i} \bigr\| 
> C_{1}/2 
\Bigr] < \epsilon, 
\ean
for all sufficiently large $n$,  using \eqref{eq:Bound1} and \eqref{eq:Bound2}. 
This implies that with a probability greater than $1 - \epsilon$ there is a  root $T_{n}$ of the equations $\sum_{i = 1}^{ n} \psi_{0i} 
(\vectheta_{ *} +  A_{n}^{-1} t)$  in the ball $\{ \| t\| < C \}$, for some $C > 0$ and all sufficiently large $n$. Defining $\hat{\vectheta}_{ *}  = 
\vectheta_{ *} +   A_{n}^{-1} T_{n}$, we obtain the desired result. 
Issues like dependence on $\epsilon$ and other technical details 
are handled using standard arguments, see \citet{ref:CBose_AoS05414} for related arguments.

 Since we have $\sup_{\| t\| < C} \| S_{n} (t) \| = o_{P} (1) $, and $T_{n}$ lies in the set $\| t\| < C$, define  $- R_{n} = S_{n} T_{n} = o_{P} (1)$. Consequently
\ban 
- R_{n} & =  S_{n} T_{n}  \\
& = 
 p^{-1/2} A_{n}^{-1} \bigl( \sum_{i = 1}^{ n} \psi_{0i} 
(\vectheta_{ *} + A_{n}^{-1} T_{n}) 
- \sum_{i = 1}^{ n} \psi_{0 i}  \bigr) 
- p^{-1/2}  \Gamma_{1 n }^{-1} \Gamma_{0 n }T_{n}
\\
& = p^{-1/2} A_{n}^{-1} \sum_{i = 1}^{ n} \psi_{0 i} 
- p^{-1/2}  \Gamma_{1 n }^{-1} \Gamma_{0n } T_{n}. 
\ean
Thus, 
\ban 
T_{n} = 
-  \Gamma_{0 n }^{-1} \Gamma_{1 n } A_{n}^{-1} 
\sum_{i = 1}^{ n} \psi_{0 i} + p^{1/2} 
\Gamma_{0 n }^{-1} \Gamma_{1 n } R_{n}
= 
- \Gamma_{0 n }^{-1/2}  \sum_{i = 1}^{ n} \Psi_{0 i} 
+ p^{1/2} \Gamma_{0 n }^{-1} \Gamma_{1 n } R_{n}.
\ean

Note that our conditions imply that for any $c$ with $\| c\| =1$, we have that $c^{T} T_{n}$ has two terms, where $\BV \bigl( - c^{T} \Gamma_{0 n }^{-1/2}  \sum_{i = 1}^{ n} \psi_{0 i} \bigr) = 1$ and
\ban 
\BE 
\bigl[ p^{1/2} c^{T} \Gamma_{0n }^{-1} \Gamma_{1 n } R_{n} 
\bigr]^{2} = O(1),
\ean
using \eqref{eq:Trace}. Using \eqref{eq:CLTConditions} we also have that for any $c$ with $\|c\| = 1$, $ c^{T} T_{n}$ converges in distribution to $N (0, 1)$. This completes the proof.
\end{proof}

We now have a parallel result on consistency of the GBS resampling scheme. The essence of this theorem is that under the same set of conditions, several resampling schemes are consistent resampling procedures to implement the e-values framework. 

\begin{theorem}
\label{Theorem:ResamplingConsistency_Smooth}
Assume conditions (C1)-(C5). Additionally, assume that the resampling weights $\BW_{r n i}$   are exchangeable random variables satisfying the conditions
\eqref{eq:W_cond}. Define $\hat{B}_{n} = \mu_{n} \tau_{n}^{-1} \hat{\Gamma}^{1/2}_{0 n} \hat{\Gamma}^{-1}_{1 n}$, where $\hat{\Gamma}_{0 n}$ and $\hat{\Gamma}_{1 n}$ are sample equivalents of $\Gamma_{0 n}$ 
and $\Gamma_{1 n}$, respectively. Conditional on the data, 
$\hat{B}_{n} (\hat{\vectheta}_{r *} - \hat{\vectheta}_{*})$ converges weakly to the $p$-dimensional standard Gaussian distribution in probability. 

Under the additional condition (C6), defining 
$b_{n}  =\mu_{n} \tau_{n}^{-1} a_{n}$, 
the distributions of $a_{n} (\hat{\vectheta}_{*} - {\vectheta}_{*})$ 
and $b_{n} (\hat{\vectheta}_{r *} - \hat{\vectheta}_{*})$
converge to the same weak limit in probability.
\end{theorem}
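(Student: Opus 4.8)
\emph{Proof proposal.} The statement is the resampling twin of Theorem~\ref{Theorem:CLT}, and the plan is to replay that proof almost verbatim, working \emph{conditionally on the data} $\cZ_n$ with the weights $\BW_{rni}$ supplying the randomness and with all limiting statements read in probability over the data. Three substitutions drive the argument: the Taylor expansion is centred at the full-sample estimator $\hat{\vectheta}_*$ rather than at $\vectheta_*$; the martingale-difference sum $\sum_i \psi_{0i}$ is replaced by the weighted sum $\sum_i \BW_{rni}\psi_{0i}(\hat{\vectheta}_*)$; and the matrix $A_n$ is replaced by $\hat B_n$. I would start from the weighted estimating equation $\sum_i \BW_{rni}\psi_{0i}(\theta)=0$ satisfied by $\hat{\vectheta}_{r*}$, expand $\sum_i \BW_{rni}\psi_{0i}(\hat{\vectheta}_* + \hat B_n^{-1}s)$ to second order in $s$, and use the key algebraic fact that $\hat{\vectheta}_*$ is itself a root of the unweighted equation, so that $\sum_i \psi_{0i}(\hat{\vectheta}_*)=0$ and the zeroth-order term collapses to $\sum_i (\BW_{rni}-1)\psi_{0i}(\hat{\vectheta}_*) = \tau_n\sum_i W_{rni}\psi_{0i}(\hat{\vectheta}_*)$, a sum of the \emph{centred} scaled weights $W_{rni} = (\BW_{rni}-1)/\tau_n$. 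From here the structure mirrors the proof of Theorem~\ref{Theorem:CLT} with $\psi_{0i}$ replaced by $W_{rni}\psi_{0i}(\hat{\vectheta}_*)$ and $\BE$ by the conditional expectation $\BE_r$.

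The first technical step is to identify the conditional second moment of the leading sum. Conditionally on the data, $\BE_r \sum_i W_{rni}\psi_{0i}(\hat{\vectheta}_*) = 0$, and its conditional covariance splits into a diagonal part $\sum_i \psi_{0i}(\hat{\vectheta}_*)\psi_{0i}(\hat{\vectheta}_*)^T =: \hat{\Gamma}_{0n}$ and an off-diagonal part $\sum_{i\ne j}\BE_r[W_{rni}W_{rnj}]\,\psi_{0i}(\hat{\vectheta}_*)\psi_{0j}(\hat{\vectheta}_*)^T$; by \eqref{eq:W_cond} the coefficients $\BE_r[W_{rni}W_{rnj}]$ are uniformly small, and since $\sum_i \psi_{0i}(\hat{\vectheta}_*)=0$ the sum $\sum_{i\ne j}\psi_{0i}(\hat{\vectheta}_*)\psi_{0j}(\hat{\vectheta}_*)^T$ equals $-\hat{\Gamma}_{0n}$, so the off-diagonal part is a negligible perturbation of $\hat{\Gamma}_{0n}$. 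Combining this with the consistency of the sample quantities $\hat{\Gamma}_{0n},\hat{\Gamma}_{1n}$ for $\Gamma_{0n},\Gamma_{1n}$ — which follows from Theorem~\ref{Theorem:CLT} together with the uniform second-derivative bound \eqref{eq:2ndMomentBound} — one checks that $\hat B_n = \mu_n\tau_n^{-1}\hat{\Gamma}_{0n}^{1/2}\hat{\Gamma}_{1n}^{-1}$ is exactly the normalization that whitens the leading term, the constant $\mu_n$ being the benign factor (with $\mu_n \to 1$) extracted from the higher weight moments in \eqref{eq:W_cond}.

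The second step is a conditional central limit theorem for $\hat{\Gamma}_{0n}^{-1/2}\sum_i W_{rni}\psi_{0i}(\hat{\vectheta}_*)$. Because the $\BW_{rni}$ are merely exchangeable, this is not the Lindeberg--Feller statement of condition (C5) but a CLT for weighted sums of exchangeable arrays, for which I would invoke the version in \citet{ref:CBose_AoS05414}; its hypotheses reduce to the conditional-variance statement just established, plus a conditional negligibility-of-maximum bound, which follows from $\BE W_{rn1}^4 < \infty$ together with $\BE_r[\max_i \|\hat{\Gamma}_{0n}^{-1/2}\psi_{0i}(\hat{\vectheta}_*)\|] \to 0$, the data-side analogue of \eqref{eq:CLTConditions} inherited from (C5) via the consistency of $\hat{\vectheta}_*$. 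The third step is remainder control, done exactly as in the proof of Theorem~\ref{Theorem:CLT}: the weighted Hessian $\sum_i \BW_{rni}\psi_{1i}(\hat{\vectheta}_*) = \hat{\Gamma}_{1n} + \tau_n\sum_i W_{rni}\psi_{1i}(\hat{\vectheta}_*)$ concentrates on $\Gamma_{1n}$ (the fluctuation term is controlled by \eqref{eq:FrobSq} and the weight moments), and the cubic term is dominated by a weighted analogue of the $R_n$-bound using (C4) and $\BE W_{rn1}^4 < \infty$; reproducing the $S_n(t)$-type uniform estimate on $\{\|s\|\le C\}$ and the root-localization argument then places $\hat{\vectheta}_{r*}$ inside an $\hat B_n^{-1}$-ball of $\hat{\vectheta}_*$ and yields a first-order representation in which $\hat B_n(\hat{\vectheta}_{r*}-\hat{\vectheta}_*)$ equals a normalized centred weighted score up to $o_{P_r}(1)$, which with the conditional CLT gives the standard-Gaussian limit. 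For the second assertion, under (C6) all coordinates share the rate $a_n$, so writing $b_n = \mu_n\tau_n^{-1}a_n$ and comparing the first-order representations from the two proofs, both reduce (after the respective normalizations, using $\hat{\Gamma}_{0n}\to\Gamma_{0n}$, $\hat{\Gamma}_{1n}\to\Gamma_{1n}$, $\mu_n\to1$) to the same deterministic matrix acting on a centred sum — the martingale-difference sum $\sum_i\psi_{0i}$ in one case, the exchangeably weighted sum $\sum_i W_{rni}\psi_{0i}(\hat{\vectheta}_*)$ in the other — and these share a weak limit because the generalized bootstrap score mimics the original score asymptotically.

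The step I expect to be the main obstacle is making the conditional CLT rigorous \emph{uniformly in the data}: one must show that the data-dependent objects $\hat{\Gamma}_{0n}$, $\hat{\Gamma}_{1n}$, the fluctuation of the weighted Hessian, and the conditional Lindeberg quantity all converge to their population counterparts on an event of probability tending to one, and only on that event invoke the exchangeable-array CLT — i.e.\ converting ``for almost every data sequence'' claims into ``in probability'' claims and controlling the interaction between the two sources of randomness. A secondary nuisance is pinning down the constant $\mu_n$ from the weight moments so that the limiting covariance is exactly the identity; this is only a moment computation but must be carried out carefully.
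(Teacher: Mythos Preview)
Your proposal is correct and takes essentially the same approach as the paper: the paper's own proof of Theorem~\ref{Theorem:ResamplingConsistency_Smooth} consists of the single sentence ``This proof has steps similar to that of the proof of Theorem~\ref{Theorem:CLT}, apart from several additional technicalities. We omit the details.'' Your plan is precisely this---replay the proof of Theorem~\ref{Theorem:CLT} conditionally on the data, centring at $\hat{\vectheta}_*$ and replacing the score sum by its weighted analogue---and in fact supplies considerably more detail (the off-diagonal covariance cancellation via $\sum_i\psi_{0i}(\hat{\vectheta}_*)=0$, the exchangeable-array CLT from \citet{ref:CBose_AoS05414}, and the identification of the uniform-in-data conditioning as the main technical hurdle) than the paper itself provides.
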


\begin{proof}[Proof of Theorem~\ref{Theorem:ResamplingConsistency_Smooth}]
This proof has steps similar to that of the proof of Theorem~\ref{Theorem:CLT}, apart from several additional technicalities. We omit the details. 
\end{proof}

\section{Theoretical proofs}
The results in Section~\ref{sec:Theory} hold under more general conditions than in the main paper--specifically, when the asymptotic distribution of $\hat \beta_*$ comes from a general class of \textit{elliptic} distributions, rather than simply being multivariate Gaussian.

Following \citet{FangEtalBook}, the density function of an elliptically distributed random variable $\cE (\mu, \Sigma, g)$ takes the form:
$ h(x; \mu, \Sigma) = |\Sigma|^{-1/2} g ((x - \mu)^T \Sigma^{-1} (x - \mu)) $
where $\mu \in \BR^p$, $\Sigma \in \BR^{p \times p}$ is positive semi-definite, and $g$ is a density function that is non-negative, scalar-valued, continuous and strictly increasing. For the asymptotic distribution of $\hat\beta_*$, we assume the following conditions:

\begin{enumerate}[nolistsep]
\item[(A1)]
There exist a sequence of positive reals $a_n \uparrow \infty$, positive-definite (PD) matrix $\bfV \in \BR^{p \times p}$ and density $g$ such that $a_{n} (\hat \bftheta_{*} - \bftheta_{0} )$ converges to $\cE ( 0_p, \bfV, g)$ in distribution, denoted by $a_{n} (\hat \bftheta_{*} - \bftheta_{0} ) \leadsto \cE ( 0_p, \bfV, g)$;

\item[(A2)]
For almost every dataset $\cZ_n$, There exist PD matrices $\bfV_n \in \BR^{p \times p}$ such that  $\text{plim}_{n \raro \infty} \bfV_n = \bfV$.
\end{enumerate}

These conditions are naturally satisfied for a Gaussian limiting distribution.

\subsection{Proofs of main results}
\begin{proof}[Proof of Theorem \ref{Theorem:ThmRightNested}]
We divide the proof into four parts:

\begin{enumerate}[nolistsep,leftmargin=*]
\item ordering of adequate model e-values,
\item convergence of all adequate model e-values to a common limit,
\item convergence of inadequate model e-values to 0,
\item comparison of adequate and inadequate model e-values,
\end{enumerate}

\paragraph{Proof of part 1.}
Since we are dealing with a finite sequence of nested models, it is enough to prove that $e_n(\cM_{1} ) > e_n(\cM_{2} )$ for large enough $n$, when both $\cM_{1 }$ and $\cM_{2}$ are adequate models and $\cM_{1 } \prec  \cM_{2 }$.

Suppose $\BT_0 = \cE (0_p, I_p, g)$. Affine invariance implies invariant to rotational transformations, and since the evaluation functions we consider decrease along any ray from the origin because of (B5), $E ( \vectheta, \BT_0)$ is a monotonocally decreasing function of $ \|\vectheta \|$ for any $\vectheta \in \BR^p$. Now consider the models $\cM^0_{1}, \cM^0_{2}$ that have 0 in all indices outside $\cS_{1}$ and $\cS_{2}$, respectively. Take some $\vectheta_{10} \in \matTheta^0_{1}$, which is the parameter space corresponding to $\cM^0_{1}$, and replace its (zero) entries at indices $j \in \cS_{2} \setminus \cS_{1}$ by some non-zero $\bfdelta \in \BR^{p - | \cS_{2} \setminus \cS_{1} |}$. Denote it by $\vectheta_{1 \bfdelta}$. Then we shall have
\begin{align*}
\bftheta_{1 \bfdelta}^T \bftheta_{1 \bfdelta} > \bftheta_{10}^T \bftheta_{10}
\quad \Rightarrow \quad
D ( \bftheta_{10}, \BT_0) > D ( \bftheta_{1 \bfdelta}, \BT_0)
\quad \Rightarrow \quad
\BE_{s1} D ( \bftheta_{10}, \BT_0) > \BE_{s1} D ( \bftheta_{1 \bfdelta}, \BT_0),
\end{align*}
where $\BE_{s1}$ denotes the expectation taken over the marginal of the distributional argument $\BT_0$ at indices $\cS_{1}$. Notice now that by construction $\bftheta_{1 \bfdelta} \in \bfTheta^0_{2}$, the parameter space corresponding to $\cM^0_{2}$, and since the above holds for all possible $\bfdelta$, we can take expectation 
over indices $\cS_{2} \setminus \cS_{1}$ in both sides to obtain $\BE_{s1} D( \bftheta_{10},  \BT_0) > \BE_{s2} D ( \bftheta_{20}, \BT_0)$, with $\bftheta_{20}$ denoting a general element in $\bfTheta_{20}$.

Combining (A1) and (A2) we get $a_n \bfV_n^{-1/2} (\hat \bftheta_{*} - \bftheta_{0} ) \leadsto \BT_0$. 
Denote $\BT_n = [ a_n \bfV_n^{-1/2} (\hat \bftheta_{*} - \bftheta_{0} ) ]$, and choose a positive $\epsilon < (\BE_{s1} D ( \bftheta_{10}, \BT_0) - \BE_{s2} D( \bftheta_{20}, \BT_0))/2$. Then, for large enough $n$ we shall have
$$
\left| D ( \bftheta_{10}, \BT_n) - D ( \bftheta_{10}, \BT_0) \right| < \epsilon
\quad \Rightarrow \quad
| \BE_{s1} D ( \bftheta_{10}, \BT_n) - \BE_{s1} D ( \bftheta_{10}, \BT_0) | < \epsilon,
$$
following condition (B4). Similarly we have $| \BE_{s2} D( \bftheta_{20}, \BT_n) - \BE_{s2} D( \bftheta_{20}, \BT_0) | < \epsilon $ for the same $n$ for which the above holds. This implies $\BE_{s1} D( \bftheta_{10}, \BT_n) > \BE_{s2} D( \bftheta_{20}, \BT_n)$.

Now apply the affine transformation $\bft (\bftheta) = \bfV_n^{1/2} \bftheta / a_n + \bftheta_{0}$ to both arguments of the depth function above. This will keep the depths constant following affine invariance, i.e. $D(\bft ( \bftheta_{10}), [\hat \bftheta_{*}]) = D( \bftheta_{10}, \BT_n)$ and $D(\bft ( \bftheta_{20}), [\hat \bftheta_{*}]) = D( \bftheta_{20}, \BT_n)$. Since this transformation maps $\bfTheta^0_{1}$ to $\bfTheta_{1}$, the parameter space corresonding to $\cM_{1}$, we get $\BE_{s1} D(\bft ( \bftheta_{10}), [\hat \bftheta_{*}]) > \BE_{s2} D(\bft ( \bftheta_{20}), [\hat \bftheta_{*}])$, i.e. $e_n ( \cM_{1} ) > e_n ( \cM_{2} )$.

\paragraph{Proof of part 2.}
For the full model $\cM_*$, $a_n(\hat \theta_* - \theta_0) \leadsto \cE_p(0, V, g)$ by (A1). It follows now from a direct application of condition (B3) that $e_n (\cM_*) \rightarrow \BE(Y, [Y]$ where $Y \sim \cE(0_p, V, g)$.

For any other adequate model  $\cM$, we use ({\sc B1}) property of $D$:
\begin{align}
D ( \hat{\vectheta}_{m}, [ \hat{\vectheta}_{*} ]) = 
D \Bigl( \hat{\vectheta}_{m} - \vectheta_{0}, \bigl[ \hat{\vectheta}_{0} - \vectheta_{0} \bigr] \Bigr),
\end{align}
and decompose the first argument
\begin{align}\label{equation:ThmRightWrongProofPart2Eqn1}
\hat{\vectheta}_{m} - \vectheta_{0} = 
( \hat{\vectheta}_{m } - \hat{\vectheta}_{*} ) 
		+ ( \hat{\vectheta}_{*} - \vectheta_{0} ).
\end{align}
We now have 
\ban
\hat{\vectheta}_{m } &= \vectheta_{m } + a_{n}^{-1} T_{m n},\\
\hat{\vectheta}_{* } &= \vectheta_{* } + a_{n}^{-1} T_{* n},
\ean
where $T_{m n}$ is non-degenerate at the indices $\cS$, and $T_{* n} \leadsto \cE(0_p, V, g)$. For the first summand of the right-hand side in \eqref{equation:ThmRightWrongProofPart2Eqn1} we then have
\begin{align}\label{equation:ThmRightWrongProofPart2Eqn2}
\hat{\vectheta}_{m} - \hat{\vectheta}_{*} = \vectheta_{m} - \vectheta_{0} + R_{n},
\end{align}
where $\BE  \| R_n^2 \| = O ( a_{n}^{-2})$, and $\theta_{m j}$ equals $\theta_{0 j}$ in indices $j \in \cS$ and $C_{j}$ elsewhere. Since $\cM$ is adequate, $ \vectheta_{m } = \vectheta_{0}$. Thus, substituting the right-hand side in \eqref{equation:ThmRightWrongProofPart2Eqn1} we get
\begin{align}
\left| D \left( \hat{\vectheta}_{m } - \vectheta_{0}, 
\left[ \hat{\vectheta}_{*} - \vectheta_{0} \right] \right) 
- D \left( \hat{\vectheta}_{*} - \vectheta_{0}, 
\left[ \hat{\vectheta}_{*} - \vectheta_{0} \right] \right) \right|
\leq \| R_{n} \|^{\alpha},
\end{align}
from Lipschitz continuity of $D (\cdot)$ given in (B2). Part 2 now follows.

\paragraph{Proof of Part 3.} 
Since the depth function $D$ is invariant under location and scale transformations, we have
\begin{align}\label{equation:ThmRightWrongProofEqn1}
D ( \hat{\vectheta}_{m }, [ \hat{\vectheta}_{*} ]) = 
D \Bigl( a_{n} (\hat{\vectheta}_{m } - \vectheta_{0}), 
\bigl[ a_{n} (\hat{\vectheta}_{*} - \vectheta_{0}) 
\bigr] \Bigr).
\end{align}
Decomposing the first argument,
\begin{align}
a_{n} (\hat{\vectheta}_{m } - \vectheta_{0}) = 
a_{n} (\hat{\vectheta}_{m } - \vectheta_{m }) + a_{n} (\vectheta_{m } - \vectheta_{0}). 
\end{align}
Since $\cM$ is inadequate, $\sum_{j \notin \cS} |C_j - \theta_{0 j} | > 0$, i.e. $\theta_m$ and $\theta_0$ are not equal in at least one (fixed) index. Consequently as $n \rightarrow \infty$, $ \| a_{n}(\vectheta_{m } - \vectheta_{0}) \| \rightarrow \infty$, thus $e_n (\cM) \rightarrow 0$ by condition (B4).

\paragraph{Proof of part 4.}

For any inadequate model $\cM_{j}, k < j \leq K$, suppose $N_{j}$ is the integer such that $e_{n_1} ( \cM_{j_1}) < e_{n_1} (\cM_{* })$ for all $n_1 > N_{j}$. Part 3 above ensures that such an integer exists for every inadequate model. Now define $N = \max_{k < j \leq K} N_{j}$. Thus $e_{n_1} ( \cM_{* } )$ is larger than e-values of all inadequate models $\cM_{j_1}$ for $k < j \leq K$.
\end{proof}

\begin{proof}[Proof of Corollary~\ref{Corollary:ZeroModelCorollary}]
By construction, $\cM_0$ is nested in all other adequate models in $\BM_0$. Hence Theorem 4.1 implies $e_n ( \cM_0) > e_n (\cM^{ad}) > e_n (\cM^{inad})$ for any adequate model $\cM^{ad}$ and inadequate model $\cM^{inad}$ in $\mathbb M_0$ and large enough $n$.
\end{proof}

\begin{proof}[Proof of Corollary~\ref{Corollary:AlgoCorollary}]
Consider $j \in \cS_{0}$. Then $\vectheta_{0} \notin \cM_{-j}$, hence $\cM_{-j}$ is inadequate. By choice of $n_1$ from Corollary~4.1, $e$-values of all inadequate models are less than 
that of $\cM_{*}$, hence $e_{n_1} (\cM_{-j}) < e_{n_1} (\cM_{*})$.

On the other hand, suppose there exists a $j$ such that $e_{n_1} (\cM_{-j}) \leq e_{n_1} (\cM_*)$ but $j \notin \cS_{0}$. Now $j \notin \cS_{0}$ means that $\cM_{-j}$ is an adequate model. Since $\cM_{-j} $ is nested within $ \cM_{*}$ for any $j$, and the full model is always adequate, we have $e_{n_1} (\cM_{-j}) > e_{n_1} (\cM_{*})$ by 
Theorem~4.1: leading to a contradiction and thus completing the proof.
\end{proof}

\begin{proof}[Proof of Theorem~\ref{thm:BootConsistency}]
Corollary~4.2 implies that
$$
\cS_0 = \{j: e_n (\cM_{-j}) < e_n (\cM_*) \}.
$$
Now define $ \bar \cS_0 = \{j: e_{r n} (\cM_{-j}) < e_{r n} (\cM_*) \} $. We also use an approximation result.

\begin{lemma}\label{lemma: approx_evalue}
For any adequate model $\cM$, the following holds for fixed $n$ and an exchangeable array of GB resamples $\BW_r$ as in the main paper:
\baq
e_{rn} = e_n (\cM) + R_r, \quad \BE_r | R_{r} |^2 = o_P(1).
\eaq
\end{lemma}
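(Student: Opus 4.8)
The plan is to show that, conditionally on the data, $e_{rn}(\cM)$ converges in probability (over the data) to the common limit $L := \BE D(Y,[Y])$, $Y \sim \cE(0_p,\bfV,g)$, that $e_n(\cM)$ approaches in Part~2 of the proof of Theorem~\ref{Theorem:ThmRightNested}. Then $R_r = e_{rn}(\cM) - e_n(\cM) = o_P(1)$; since the depths used here are bounded (such as Mahalanobis and projection depths, and generally via (B5)), $|R_r|$ is bounded and this in-probability convergence upgrades to the claimed $\BE_r|R_r|^2 = o_P(1)$ --- or, without invoking boundedness, by a uniform-integrability argument based on the fourth-moment conditions \eqref{eq:W_cond}. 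So the content is in proving $e_{rn}(\cM)\to L$, which mirrors Part~2 of the proof of Theorem~\ref{Theorem:ThmRightNested} with the generalized bootstrap in place of the sampling distribution.

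First I would use location invariance (B1) to write $e_{rn}(\cM) = \BE_{r_1} D\bigl(\hat\theta_{r_1 m}-\hat\theta_*,\ [\hat\theta_{r*}-\hat\theta_*]\bigr)$ and split the point argument as $\hat\theta_{r_1 m}-\hat\theta_* = (\hat\theta_{r_1*}-\hat\theta_*) + \Delta_{r_1}$, where $\Delta_{r_1} := \hat\theta_{r_1 m}-\hat\theta_{r_1*}$ vanishes on $\cS$ and equals $-\hat\theta_{r_1*, j}$ on $\cS^c$. Since $\cM\in\BM_0$ is adequate, $\theta_{0 j}=0$ for $j\notin\cS$, so the first-order expansion \eqref{eqn:BootEqn} together with \eqref{eq:W_cond} gives $\BE_{r_1}\|\Delta_{r_1}\|^2 = O_P(\tau_n^2/a_n^2) = o_P(1)$, using $\tau_n/a_n \to 0$; hence $\BE_{r_1}\|\Delta_{r_1}\|^\alpha = o_P(1)$ by Jensen's inequality. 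Lipschitz continuity (B2), applied exactly as in Part~2 of the proof of Theorem~\ref{Theorem:ThmRightNested}, then gives $|e_{rn}(\cM) - e_{rn}(\cM_*)| \le \BE_{r_1}\|\Delta_{r_1}\|^\alpha = o_P(1)$, where $e_{rn}(\cM_*) = \BE_{r_1} D(\hat\theta_{r_1*},[\hat\theta_{r*}])$ is the full-model bootstrap e-value.

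It remains to identify $\lim e_{rn}(\cM_*)$. By affine invariance (B1), $e_{rn}(\cM_*) = \BE_{r_1} D\bigl(b_n(\hat\theta_{r_1*}-\hat\theta_*),\ [b_n(\hat\theta_{r*}-\hat\theta_*)]\bigr)$ with $b_n$ the bootstrap rate of Theorem~\ref{Theorem:ResamplingConsistency_Smooth}; that theorem, together with (A2), yields $[b_n(\hat\theta_{r*}-\hat\theta_*)] \leadsto \cE(0_p,\bfV,g)$ conditionally on the data and in probability --- the same weak limit as $[a_n(\hat\theta_*-\theta_0)]$. The uniform convergence in (B3), upgraded to the in-probability setting by an a.e.-subsequence argument, then gives $e_{rn}(\cM_*)\to \BE D(Y,[Y]) = L$. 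Chaining this with $|e_{rn}(\cM)-e_{rn}(\cM_*)| = o_P(1)$ and $e_n(\cM)\to L$ yields $e_{rn}(\cM) = e_n(\cM) + o_P(1)$, and the bounded-range (or uniform-integrability) argument of the first paragraph completes the proof.

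The main obstacle is the double layer of randomness: the reference law entering $e_{rn}(\cM)$, namely $[b_n(\hat\theta_{r*}-\hat\theta_*)]$, is itself random, $n$-dependent, and converges only in probability, whereas (B2) and (B3) are stated for a single fixed measure. Making the Lipschitz estimate and the mean-depth limit hold uniformly along this random sequence --- and, if one does not invoke boundedness of $D$, securing the conditional $L^2$ control via uniform integrability fed by \eqref{eq:W_cond} and the moment bound on $\Delta_{r_1}$ from \eqref{eqn:BootEqn} --- is the delicate step. Once $e_{rn}(\cM)\to L$ in probability is established, it transfers to the stated $\BE_r|R_r|^2 = o_P(1)$.
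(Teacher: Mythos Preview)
Your overall strategy --- show $e_{rn}(\cM)\to L$ and combine with $e_n(\cM)\to L$ from Part~2 of Theorem~\ref{Theorem:ThmRightNested} --- is sound and reaches the result, but the paper's own proof is shorter and uses a different, more advantageous decomposition. You split $\hat\theta_{r_1 m}-\hat\theta_* = (\hat\theta_{r_1*}-\hat\theta_*) + \Delta_{r_1}$ with $\Delta_{r_1}=\hat\theta_{r_1 m}-\hat\theta_{r_1*}$, whereas the paper splits $\hat\theta_{r_1 m}-\hat\theta_* = (\hat\theta_{r_1 m}-\hat\theta_m) + (\hat\theta_m-\hat\theta_*)$ and scales by $a_n/\tau_n$ from the outset. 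The paper's second piece $(a_n/\tau_n)(\hat\theta_m-\hat\theta_*)$ is \emph{non-random given the data} and is $O_P(\tau_n^{-1})\to 0$ at the bootstrap scale --- this is exactly where the condition $\tau_n\to\infty$ enters --- while the first piece has, by Theorem~\ref{Theorem:ResamplingConsistency_Smooth}, the same conditional weak limit as $a_n(\hat\theta_m-\theta_m)$. This lets the paper conclude $e_{rn}(\cM)=e_n(\cM)+o_P(1)$ directly, without passing through the common limit $L$ or comparing to $e_{rn}(\cM_*)$.

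Your $\Delta_{r_1}$, by contrast, does \emph{not} vanish at the bootstrap scale: on $\cS^c$ one has $(a_n/\tau_n)\Delta_{r_1,j}=-(a_n/\tau_n)\hat\theta_{r_1*,j}$, and since $(a_n/\tau_n)(\hat\theta_{r_1*,j}-\hat\theta_{*,j})$ has a non-degenerate limit while $(a_n/\tau_n)\hat\theta_{*,j}=O_P(\tau_n^{-1})$, the scaled correction is $O_P(1)$. Your Lipschitz step therefore applies (B2) to a reference law $[\hat\theta_{r*}]$ that concentrates with $n$, so the $\delta$ in (B2) may shrink at the same rate as $\|\Delta_{r_1}\|$; at the stable scale the perturbation is order one and (B2) gives nothing. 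This is the same looseness you inherit from Part~2 of Theorem~\ref{Theorem:ThmRightNested}, which you explicitly mimic, so your argument is no less rigorous than that part of the paper --- but the paper's Lemma proof avoids the issue entirely via the $\hat\theta_m$-centered decomposition, which is the cleaner route.
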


Using Lemma~\ref{lemma: approx_evalue} for $\cM_{-j}$ and $\cM_*$ we now have 
\begin{align*}
e_{r n} (\cM_{-j}) &= e_{n} (\cM_{-j}) + R_{rj},\\
e_{r n} (\cM_{*}) &= e_{n} (\cM_{*}) + R_{r*},
\end{align*}
such that $\BE_r | R_{r*} |^2 = o_P(1)$ and $\BE_r | R_{rj} |^2 = o_P(1)$ for all $j$. Hence $P_1( \bar \cS_0 = \cS_0) \rightarrow 1$ as $n \rightarrow \infty$, $P_1$ being probability conditional on the data. Similarly one can prove that the probability conditional on the bootstrap samples that $\bar \cS_0 = \hat \cS_0$ holds goes to 1 as $R, R_1 \rightarrow \infty$, completing the proof.
\end{proof}

\subsection{Proofs of auxiliary results}
\begin{proof}[Proof of Lemma~\ref{lemma: approx_evalue}]

We decompose the resampled depth function as
\ban 
D \Bigl( \hat{\vectheta}_{ r_1 m}, [ \hat{\vectheta}_{ r *} ] \Bigr)
& = D \Bigl( \frac{a_n}{\tau_{n}} \bigl( \hat{\vectheta}_{r_1 m} - \hat{\vectheta}_{*} \bigr), 
 		\left[ \frac{a_n}{\tau_{n}} \bigl( \hat{\vectheta}_{r *} - \hat{\vectheta}_{*} \bigr) \right] \Bigr) \\
& = D \Bigl( 
 \frac{a_n}{\tau_{n}} \bigl( \hat{\vectheta}_{r_1 m} - \hat{\vectheta}_{ m} \bigr)
 - \frac{a_n}{\tau_{n}} \bigl( \hat{\vectheta}_{ m} - \hat{\vectheta}_{*} \bigr), 
 		\left[ \frac{a_n}{\tau_{n}} \bigl( \hat{\vectheta}_{r *} - \hat{\vectheta}_{*} \bigr) \right] \Bigr).
\ean
Conditional on the data, $(a_n / \tau_n) (\hat \theta_{r_1 m} - \hat \theta_{m})$ has the same weak limit as $a_n (\hat \theta_{m} - \theta_{m})$, and the same holds for $(a_n / \tau_n) (\hat \theta_{r_1 *} - \hat \theta_{*})$ and $a_n (\hat \theta_{*} - \theta_{*})$. Now \eqref{equation:ThmRightWrongProofPart2Eqn2} and $\tau_n \rightarrow \infty$ combine to give
$$
\frac{a_n}{\tau_{n}} \bigl( \hat{\vectheta}_{ m} - \hat{\vectheta}_{*} \bigr) \stackrel{P}{\rightarrow} 0,
$$
as $n \rightarrow \infty$. Lemma~\ref{lemma: approx_evalue} follows directly now.
\end{proof}

\section{Details of experiments}
\label{sec:exp}
Among the competing methods, for stepwise regression there is no tuning. MIO requires specification of a range of desired sparsity levels and a time limit for the MIO solver to run for each sparsity level in the beginning. We specify the sparsity range to be $\{1, 3, \ldots, 29 \}$ in all settings to cover the sparsity levels across different simulation settings, and the time limit to be 10 seconds. We select the optimal MIO sparsity level as the one for which the resulting estimate gives the lowest BIC. We use BIC to select the optimal regularization parameter for Lasso and SCAD as well. The Knockoff filters come in two flavors: Knockoff and Knockoff+. We found that Knockoff+ hardly selects any features in our settings, so use the Knockoffs for evaluation, setting its false discovery rate threshold at the default value of 0.1. We shall include these details in the appendix.


\section{Details of Indian Monsoon data}
\label{Section:IndianMonsoon}
Various  studies indicate that our knowledge about the physical drivers of precipitation in India is incomplete; this is in addition to the known difficulties in modeling precipitation itself \citep{KnuttiEtal10, TrenberthEtal03, Trenberth11, WangEtal05}. For example, \cite{Goswami05} discovered an upward trend in frequency and magnitude of extreme rain events, using daily central Indian rainfall data on a $10^{\circ}$ $\times$ $12^{\circ}$ grid, but a similar study on a $1^{\circ}\times 1^{\circ}$ gridded data by \cite{GhoshEtal16} suggested that there are both increasing and decreasing trends of extreme rainfall events, depending on the location. Additionally, \cite{Krishnamurthy09} reported increasing trends in exceedances of the 99th percentile of daily rainfall; however, there is also a decreasing trend for exceedances of the 90th percentile data in many parts of India. Significant spatial and temporal variabilities at various scales have also been discovered for Indian Monsoon \citep{Dietz2014, Dietz2015Chapter}.

We attempt to identify the driving factors behind precipitation during the Indian monsoon season using our e-value based feature selection technique. Data is obtained from the repositories of the National Climatic Data Center (NCDC) and National Oceanic and Atmospheric Administration (NOAA), for the years 1978-2012. 
We obtained data 35 potential predictors of the Indian summer precipitation:

\noindent\textbf{(A) Station-specific}: (from 36 weather stations across India) Latitude, longitude, elevation, maximum and minimum temperature, tropospheric temperature difference ($\Delta TT$), Indian Dipole Mode Index (DMI), Ni\~{n}o 3.4 anomaly;

\noindent\textbf{(B) Global}:
\begin{itemize}[nolistsep,leftmargin=*]
\item $u$-wind and $v$-wind at 200, 600 and 850 mb;

\item Ten indices of Madden-Julian Oscillations: 20E, 70E, 80E, 100E, 120E, 140E, 160E, 120W, 40W, 10W;

\item Teleconnections: North Atlantic Oscillation (NAO), East Atlantic (EA), West Pacific (WP), East Pacific/North Pacific (EPNP), Pacific/North American (PNA), East Atlantic/Western Russia (EAWR), Scandinavia (SCA), Tropical/Northern Hemisphere (TNH), Polar/Eurasia (POL);

\item Solar Flux;

\item Land-Ocean Temperature Anomaly (TA).
\end{itemize}

These covariates are all based on existing knowledge and conjectures from the actual Physics driving Indian summer precipitations. The references provided earlier in this section, and multiple references contained therein may be used for background knowledge on the physical processes related to Indian monsoon rainfall, which after decades of study remains one of the most challenging problems in climate science.

\section{Details of fMRI data implementation}
\label{sec:fmri}

\begin{figure}
\centering
\includegraphics[width=.3\textwidth]{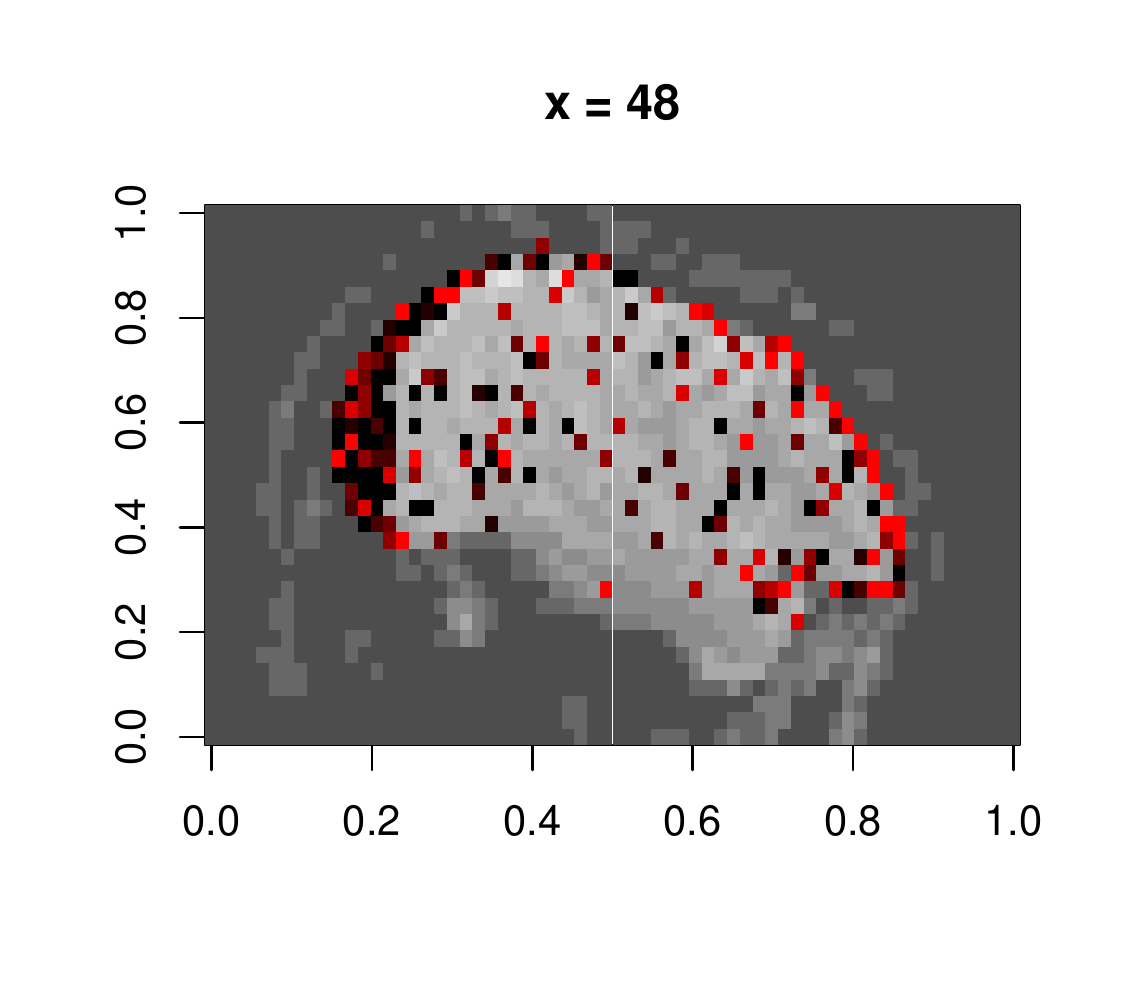}
\includegraphics[width=.3\textwidth]{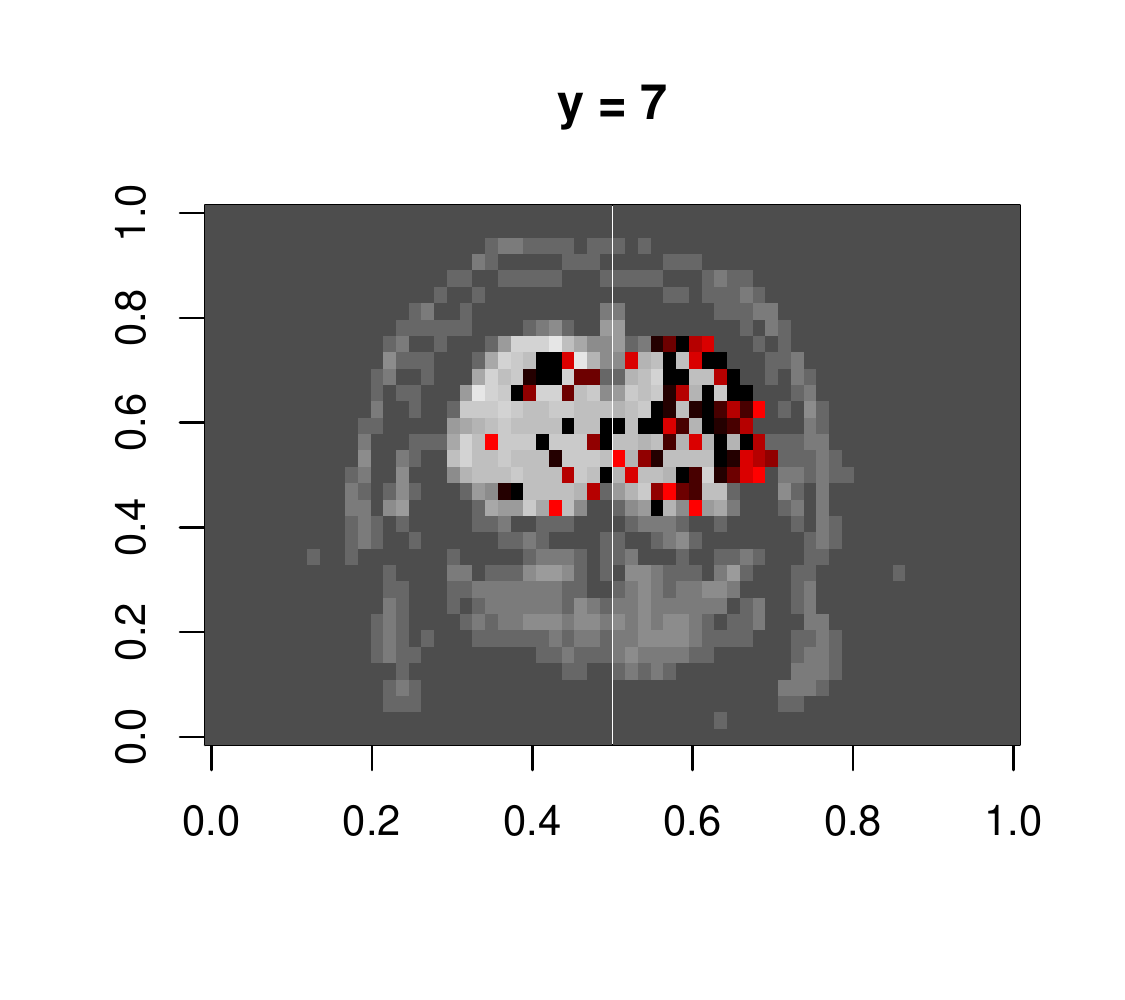}
\includegraphics[width=.3\textwidth]{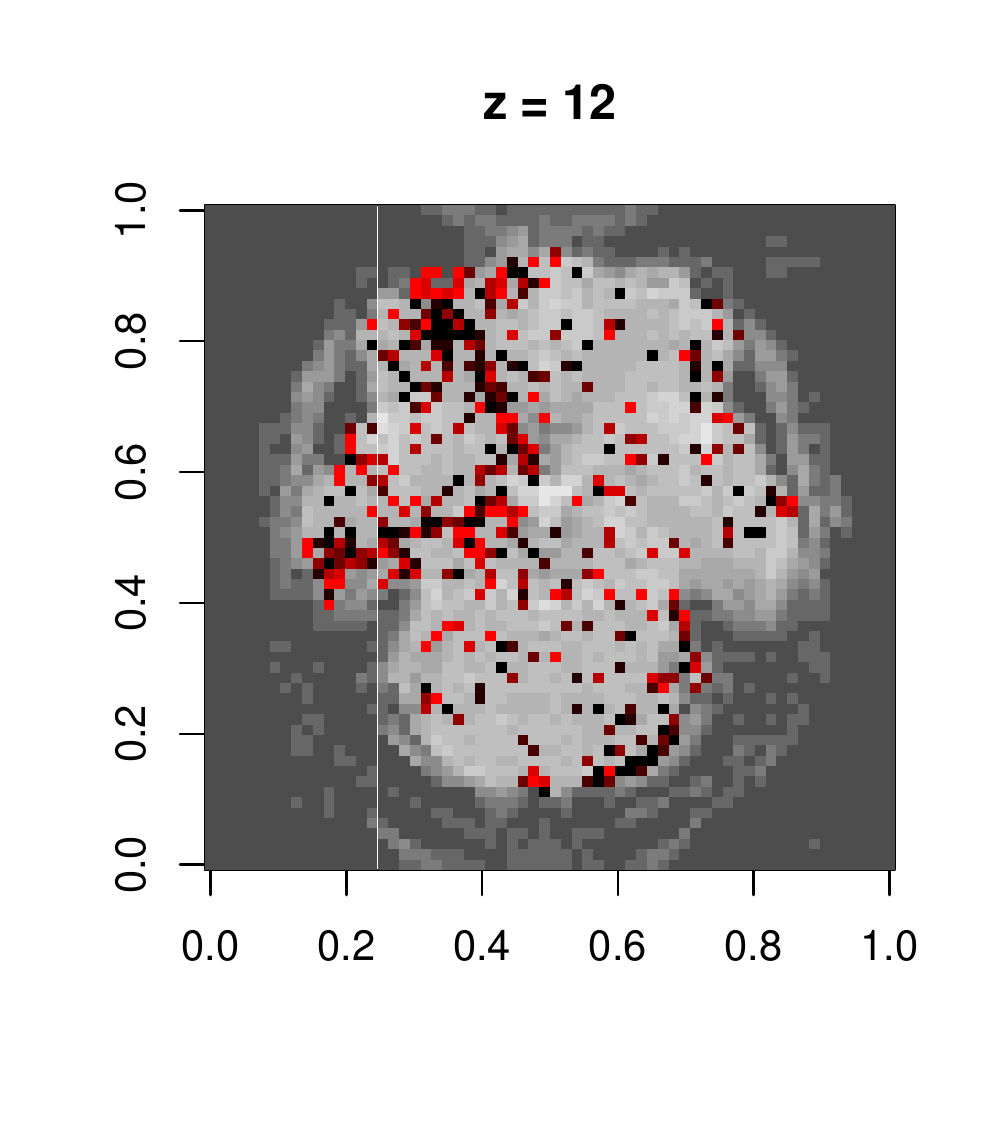}
\caption{(Top) Plot of significant $p$-values at 95\% confidence level at the specified cross-sections; (bottom) a smoothed surface obtained from the $p$-values clearly shows high spatial dependence in right optic nerve, auditory nerves, auditory cortex and left visual cortex areas}
\label{fig:fmrifigure}
\end{figure}

Typically, the brain is divided by a grid into three-dimensional array elements called voxels, and activity is measured at each voxel. More specifically, a series of three-dimensional images are obtained by measuring Blood Oxygen Level Dependent (BOLD) signals for a time interval as the subject performs several tasks at specific time points. A single fMRI image typically consists of voxels in the order of $10^5$, which makes even fitting the simplest of statistical models computationally intensive when it is repeated for all voxels to generate inference, e.g. investigating the differential activation of brain region in response to a task.

The dataset we work with comes from a recent study involving 19 test subjects and two types of visual tasks \citep{WakemanHenson15}. Each subject went through 9 runs, in which they were showed faces or scrambled faces at specific time points. In each run 210 images were recorded in 2 second intervals, and each 3D image was of the dimension of $64 \times 64 \times 33$, which means there were 135,168 voxels. Here we use the data from a single run on subject 1, and perform a voxelwise analysis to find out the effect of time lags and BOLD responses at neighboring voxels on the BOLD response at a voxel. Formally we consider separate models at voxel $i \in \{1,2,...,V\}$, with observations across time points $t \in \{1,2,...,T\}$.

Clubbing together the stimuli, drift, neighbor and autoregressive terms into a combined design matrix $\tilde \bfX = (\tilde \bfx(1)^T,...,\tilde \bfx(T)^T)^T$ and coefficient vector $\bftheta_i$, we can write $y_i(t) = \tilde \bfx(t)^T \bftheta_i + \epsilon_i(t) $. We estimate the set of non-zero coefficients in $\bftheta_i$ using the e-value method. Suppose this set is $R_i$, and its subsets containing coefficient corresponding to neighbor and non-neighbor (i.e. stimuli and drift) terms are $S_i$ and $T_i$, respectively. To quantify the effect of neighbors we now calculate the corresponding $F$-statistic:
$$
F_i = \frac{(\sum_{n \in S_i} \tilde x_{i,n} \hat\theta_{i,n})^2}{(y_i(t) - \sum_{n \in T_i} \tilde x_{i,n} \hat\theta_{i,n})^2} \frac{|n-T_i|}{|S_i|},
$$
and obtain its $p$-value, i.e. $P(F_i \geq F_{|S_i|,|n-T_i|})$.

Figure~\ref{fig:fmrifigure} shows plots of the voxels with a significant $p$-value from the above $F$-test, with a darker color associated with lower p-value, as opposed to the smoothed surface in the main paper. Most of the significant terms were due to the coefficients corresponding to neighboring terms. A very small proportion of voxels had any autoregressive effects selected (less than 1\%), and most of them were in regions of the image that were outside the brain, indicating noise.

In future work, we aim to expand on the encouraging findings and repeat the procedure on other individuals in the study. An interesting direction here is to include subject-specific random effects and correlate their clinical outcomes (if any) to observed spatial dependency patterns in their brain.

\end{document}